\theoremstyle{plain}
\newtheorem{theorem}{Theorem}[section]
\newtheorem{lemma}[theorem]{Lemma}
\theoremstyle{definition}
\theoremstyle{remark}
\newtheorem{remark}[theorem]{Remark}
\newenvironment{proofsketch}{\proof}{\endproof}
\newtheorem{case}{Case}
\definecolor{NavyBlue}{HTML}{5f9ea0}
\definecolor{BrickRed}{HTML}{B73239}
\definecolor{darkgreen}{HTML}{046307}
\definecolor{darkteal}{HTML}{2f6882}
\definecolor{Lavender}{HTML}{785d98}
\definecolor{darkblue}{rgb}{0, 0, 0.5}
\newcommand{\banditName}[1]{networked restless bandits}
\newcommand{\alg}[0]{\textsc{Greta}}
\newcommand{\grAgnostic}{$^{\oslash}$}
\newcommand{\grAware}{$^{\dagger}$}
\DeclareMathOperator*{\argmax}{arg\,max}
\DeclareMathOperator*{\dprime}{\prime \prime}
\newcommand{\algorithmicbreak}{\textbf{break}}
\algnewcommand{\algC}[1]{{\color{gray} \Comment{#1}}}
\algnewcommand{\algLC}[2]{{\color{gray} \Statex \hspace{#1em} \(\triangleright\) #2}}
\DeclareMathOperator{\dom}{dom}
\newcommand{\calN}{{\cal N}}
\newcommand{\calNin}{{\cal N}_{\text{in}}}
\newcommand{\calNout}{{\cal N}_{\text{out}}}
\algnewcommand{\IfElse}[2]{
  \algorithmicif\ #1\ \algorithmicelse\ #2}
\algnewcommand{\IfThenElse}[3]{
  \algorithmicif\ #1\ \algorithmicthen\ #2\ \algorithmicelse\ #3}
\newcommand*{\email}[1]{\href{mailto:#1}{\texttt{#1}}}
\title{Networked Restless Bandits with Positive Externalities}
\author {
    Christine Herlihy,
    John P. Dickerson\\ 
}
    \email{cherlihy@umd.edu}, \email{johnd@umd.edu}
\begin{document}

\maketitle

\begin{abstract}
Restless multi-armed bandits are often used to model budget-constrained resource allocation tasks where receipt of the resource is associated with an increased probability of a favorable state transition. Prior work assumes that individual arms only benefit if they receive the resource directly. However, many allocation tasks occur within communities and can be characterized by positive externalities that allow arms to derive partial benefit when their neighbor(s) receive the resource. We thus introduce \emph{\banditName{}}, a novel multi-armed bandit setting in which arms are both restless and embedded within a directed graph. We then present \alg{}, a graph-aware, Whittle index-based heuristic algorithm that can be used to efficiently construct a constrained reward-maximizing action vector at each timestep. Our empirical results demonstrate that \alg{} outperforms comparison policies across a range of hyperparameter values and graph topologies. 
For reproducibility purposes, all code is available at: \href{https://github.com/crherlihy/networked_restless_bandits}{\faGithub{} \texttt{crherlihy/networked\_restless\_bandits}}.
\end{abstract}

\section{Introduction}
\label{sec:introduction}
We study the planning task of allocating budget-constrained indivisible resources so as to maximize the expected amount of time that members of a cohort will spend in a desirable state (e.g., adherent to a prescribed exercise regimen). Restless multi-arm bandits (RMABs) are well-suited for such tasks, as they represent each individual as a Markov decision process (MDP) whose stochastic state transitions are governed by an action-dependent transition function. 

Conventionally, an arm must receive the resource at time $t$ to derive any benefit from it, where benefit takes the form of an increased probability of transitioning to the desirable state at time $t+1$ (i.e., relative to non-receipt). However, many resource allocation tasks \emph{occur within communities} and can be characterized by \emph{positive externalities} that allow arms to derive partial, indirect benefit when their neighbor(s) receive the resource. We consider chronic disease management programs as a motivating example. These programs often combine resource-constrained physician support with less cost-intensive, more scalable peer support to encourage participants to make lifestyle modifications. To this end, we introduce \emph{\banditName{}}, a novel multi-armed bandit setting in which arms are both restless and embedded within a directed graph. We then present a graph-aware, Whittle-based heuristic algorithm that is constrained reward-maximizing in this setting.
Our core contributions include:
\begin{enumerate}[label=\textbf{(\roman*)},itemsep=0pt, leftmargin=2em]
    \item Our networked restless bandit model, which lets us represent topological relationships between arms, and associate arm $i$'s receipt of a pull with positive externalities for its neighbors.
    \item \alg, a graph-aware, Whittle index-based heuristic algorithm that lets us efficiently construct a constrained reward-maximizing
    mapping from arms to actions at each timestep. 
    \item Empirical results which demonstrate that \alg{} outperforms comparison policies across a range of hyperparameter values and graph topologies.
\end{enumerate}

\subsection{Related Work}
\label{sec:related}
\par \textbf{Restless bandits}: The restless multi-armed bandit (RMAB) framework was introduced by \citet{whittle1988restless} as a way to model the sequential allocation of a budget-constrained, indivisible resource over a population of $N$ dynamic arms, where: (1)~at most $k \ll n$ arms can receive the resource (i.e., a pull) at any given timestep; and (2) the state of each arm evolves over time, regardless of whether or not it is pulled. We provide a formal description in Section~\ref{sec:restless}. 

\par \textbf{Indexability}: In the general case, it is PSPACE-hard to pre-compute the optimal policy for a given cohort of restless arms~\citep{papadimitriou1994complexity}. However, as conjectured by \citet{whittle1988restless} and proven by \citet{weber1990index}, when each arm is indexable, a tractable solution exists that is provably asymptotically optimal: we can decouple the arms and consider a Lagrangian relaxation of the original problem. In this context, the Whittle index can be thought of as the infimum subsidy required to make an arm indifferent between a pull and passivity, given its current state. Whittle-index based policies use these index values to rank arms when selecting which $k$ arms to pull. 

Proving indexability can be difficult and often requires the problem instance to satisfy specific structural properties, such as the optimality of threshold policies~\citep{liu2010indexability}. Additionally, much of the foundational work in this space focuses on the two-action setting, and cannot be directly extended to the multi-action setting that we consider. 

\citet{glazebrook2011general} do consider the multi-action setting, but for divisible rather than indivisible resources; they also require an arm to consume this resource at a level that is decreasing in the resource charge. \citet{killian2021beyond} study multi-action restless bandits and do not make any of the structural assumptions required to verify indexability, but instead develop a Lagrangian bound-minimization approach; however, they do not consider relationships among arms. 

\citet{mate2020collapsing} introduce the collapsing bandit model, and demonstrate that this problem is indexable when forward or reverse threshold policies are optimal. They also introduce an efficient, closed-form approach to computing the Whittle index called \textsc{Threshold Whittle (TW)}, and empirically demonstrate that this approach performs well even when optimality conditions are not satisfied. We leverage \textsc{TW} as a subroutine within \alg{}.

\par \textbf{Bandits and graphs}:
Prior work at the intersection of multi-armed bandits and graphs has tended to focus on \emph{stochastic}, rather than restless arms, and on \emph{graph-structured feedback} (i.e., rewards), rather than the embedding of arms within a directed graph, and/or the spillover effects associated with allocation in the face of adjacency. For example, \citet{valko2016bandits} examines a graph structure among {\it actions} in \emph{stochastic} bandits, and \citet{lu2021stochastic} examines a graph structure over {\it rewards}. However, we examine a graph structure among {\it arms} in the restless bandit setting.

In recent work,~\citet{ou2022networked} look at a mobile intervention setting. Similarly to our model, they combine the traditional restless bandit setting with network externalities; however, their model and goal are fundamentally different.  Their arms represent locations on a network with pulls impacting a mixture of subpopulations that are located at or near that pull, probabilistically. In contrast, in our model, vertices represent individual arms, and our algorithm exploits---when advantageous---the propensity for allocating a high-cost, high-benefit resource to one arm to unlock potential lower-cost, intermediate-benefit resources for the arm's neighbors.

\section{Model Preliminaries} 
\label{sec:model} 
\subsection{Restless Multi-arm Bandits}
\label{sec:restless}
The restless bandit (RMAB) setting features an
agent with $n \in \mathbb{N}$ arms. The state of each arm evolves over time and in response to the agent's actions, in a way that is governed by the arm's underlying Markov decision process (MDP). Each MDP is defined by a state space, $\mathcal{S}$, an action space, $\mathcal{A}$, a cost function, $C: \mathcal{A} \rightarrow \mathbb{R}$, a local reward function, $r: \mathcal{S} \rightarrow \mathbb{R}$, and a transition function, $P: \mathcal{S} \times \mathcal{A} \rightarrow \mathcal{S}$. The 
objective is to find a policy, $\pi: \mathcal{S} \rightarrow \mathcal{A}$, that maximizes total expected discounted reward over a finite time horizon, $T$---i.e., $\pi^* = \argmax_{\pi} E_{\pi}\left[  R(\cdot)\right]$. The agent must select exactly one action per arm at each timestep, and the associated costs must not exceed the per-timestep budget,~$B~\in~\mathbb{R}_{\geq 0}$.

\subsection{Motivating Example}
For ease of exposition, we ground our networked restless bandit model in a \emph{motivating example}: let arms represent patients striving to adhere to a chronic disease management program, such as an exercise regimen. A patient's ``state'' on any given day is thus determined by whether they adhere (i.e., exercise), or fail to adhere to their regimen. 
To encourage adherence, many such programs feature a combination of resource-constrained physician- and peer support~\citep{fisher2017peer}. Examples include, but need not be limited to, a reminder call from a physician, a supportive message from a fellow participant, or the provision of awareness-raising outreach materials. Thus, a coordinator seeking to maximize the number of patients who exercise over the program's duration might select a small subset of patients each day to receive a call from a physician, and ask these people to message a handful of their peers in turn, or pass along an educational pamphlet to their caregiver(s). In each case, the lower-cost, easier-to-scale information dissemination option amplifies physician outreach, allowing a broader subset of individuals to receive partial benefit.

\subsection{Networked Restless Bandits}
\label{sec:networkedRmab}
With this motivating example in mind, we now introduce our networked restless bandit model, which allows us to model directed relationships among arms. Given a set of $n$ arms, let $G=(V,E)$ be a directed graph, and let there exist a bijective mapping $\upvarphi$ from arms to vertices --- i.e., $\forall v \in V,\ \exists! \ i \in [n] \text{ s.t. } \upvarphi(i) = v$. Let a directed edge, $e \in E$, exist between arms $u$ and $v$ if it is possible for $v$ to benefit indirectly when $u$ receives a pull. Let~$\calNin(u)~=~\{v\in V \ |~\ \exists e_{v,u} \in E\}$ and $\calN_{out}(u)~=~\{v\in V \ |~\ \exists e_{u,v} \in E\}$ represent $u$'s one-hop indegree and outdegree neighborhoods, respectively. This graph is assumed to be constructed or operated by the agent; as such, it is assumed to be observable. Real-life examples with this property include mentoring programs and online social networks.

\par \textbf{State space}: We consider a discrete state space, $\mathcal{S}~:=~\{0,1\}$, 
where the states admit a total ordering by desirability, such that state 1 is more ``desirable'' than state 0. In our example, state 0 represents non-adherence to the exercise regimen, while state 1 represents adherence. We assume 
each arm's state is observable (e.g., via fitness tracker data).

\par \textbf{Action space}: The traditional restless bandit setting considers a binary action space, $\mathcal{A} := \{0,1\}$, where 1 (or 0) represents the decision to pull (or not pull) arm $i$ at time~$t$. To model positive externalities, we define an extended action space, $\mathcal{A} := \{0:  \texttt{no-act}, 1: \texttt{message}, 2: \texttt{pull}\}$. Here, actions 0 and 2 correspond to the actions \emph{don't pull} and \emph{pull} respectively. We note that our \texttt{message} action need not represent a literal message. Instead, it represents an intermediate action with respect to desirability that gets ``unlocked'' as an available action for vertex $v$ at time $t$ only when some vertex $u \in \calNin(v)$ receives a pull at time $t$. 

\par \textbf{Transition function}:
For each arm $i \in [n]$, let $P^{a,i}_{s,s^\prime}$ represent the probability that arm $i$ will transition from state $s$ to ${s}^\prime$ given action $a$. In the offline setting, these transition matrices are assumed to be static and known to the agent at planning time. This assumption is reasonable when historical data from the same or similar population(s) provides a source for informative priors, as is common in many domains, including healthcare and finance \citep{steimle2017markov, pasanisi2012estimating}. Extension to the online setting where transition matrices must be learned is possible via Thompson sampling~\citep{thompson1933likelihood, ortner2012regret, jung2019regret, jung2019thompson}.

We assume nonzero transition matrix entries, and impose two sets of domain-motivated \textbf{structural constraints}~\citep{mate2020collapsing}: 
\begin{enumerate*}[label=(\roman*)]
    \item  $\forall a \in \mathcal{A}, P^a_{0,1}< P^a_{1,1}$ and 
    \item $\forall (a, a^\prime) \in \mathcal{A} \times \mathcal{A}, a < a^\prime \rightarrow P^a_{0,1} < P^{a^\prime}_{0,1}; P^a_{1,1} < P^{a^\prime}_{1,1}$.
\end{enumerate*}
Constraint set (i) implies that each arm is more likely to stay in the desirable state (i.e., $s=1$) than transition there from the undesirable state (i.e., $s=0$). Constraint set (ii) implies that messages and pulls are beneficial when received and that a strict preference relation over actions can be defined for each arm, such that \texttt{no-act} $\prec$ \texttt{message} $\prec$ \texttt{pull}.

\par \textbf{Cost function}: We map our action space to the cost vector $\vec{c} = [0, \psi, 1]$, where $0 \leq \psi < 1$. Intuitively, this mapping preserves standard notion that no cost is incurred when an arm does not receive any form of intervention. It also encodes the idea that the more beneficial an action is, the more expensive it is to provide, which motivates us to exploit positive externalities. 
Additionally, when there are no edges, i.e., $E = \emptyset$, and no messages can be sent, the unit cost of a pull lets us recover the original restless bandit setting, where we must choose which $k \ll n$ arms to pull at each timestep. 

\par \textbf{Objective and constraints}: It is possible, though not tractable at scale, to take a constrained optimization-based approach to solving for the optimal policy, $\pi^*$. We build on \citet{killian2021beyond}'s approach below to show how our constrained setting can be modeled. To begin, let $\mathbf{s}$ represent a vector containing the state of each arm, i.e. $[s^i \in \mathcal{S}|i\in[n]]$, and let $\mathbf{X}$ represent a matrix containing binary decision variables, one for each of $n$ arms and $|{\cal A}|$ actions. We require our local reward function, $r: \mathcal{S} \rightarrow \mathbb{R}$ to be non-decreasing in $s$, which is consistent with our goal of maximizing the expected time that each arm spends in the ``desirable'' state. Equation~\ref{eqn:objective} formalizes our task:

{\centering
\begin{equation}
\label{eqn:objective}
\begin{array}{l@{}ll}
J(\mathbf{s}) = \max\limits_{\mathbf{X}} & \left\{ \displaystyle\sum_{i=0}^{n-1} r^i(s^i) + \beta \mathbb{E}[J(\mathbf{s}^\prime), \mathbf{X}]\right\}  & \\

\text{subject to } & \displaystyle\sum_{i=0}^{n-1} \sum_{j=0}^{|\mathcal{A}|-1} x_{i,j} \cdot c_j \leq B & \\

& x_{i,1} \leq \displaystyle\bigvee_{i^\prime \in \calNin(i)}
x_{i^\prime, 2} & \forall i \in [n] \\ 

& \displaystyle\sum_{j = 0}^{|\mathcal{A}|-1} x_{i,j} = 1 & \forall i \in [n]\\

& \mathbf{X} \in \{0,1\}^{n \times |\mathcal{A}|} &

\end{array}
\end{equation}}

Our goal is to find assignments of the decision variables contained in $\mathbf{X}$ such that expected discounted reward is maximized, subject to a series of feasibility constraints: (i) across all actions and arms, do not expend more than $B$ budget; (ii) ensure that if \texttt{message} is chosen for an arm $i$, then that arm has at least one indegree neighbor $i'$ such that \texttt{pull} was chosen; and, (iii) ensure that each arm receives exactly one action at each timestep. However, two challenges arise: (1) a direct solution via value iteration is exponential in $n$, and (2) Lagrangian relaxation-based approaches rely on the decoupling of arms, which jeopardizes the satisfaction of our neighborhood constraint on actions. This motivates us to propose a graph-aware, Whittle-based heuristic algorithm.

\section{Algorithmic Approach}
\label{sec:algorithm}
Here, we introduce \alg{}, a graph-aware, Whittle-index-based heuristic algorithm that can be used to efficiently construct a constrained reward-maximizing policy. A key insight that \alg{} exploits is that while we cannot decouple arms in the networked setting, since we must know whether any of an arm's indegree neighbors will receive a pull at time $t$ to know if the arm is eligible to receive a message, we \emph{can} compute two sets of Whittle indices for each arm, by considering each active action as a separate instance of a two-action problem. Note that the structural constraints ensure that for a given state, an arm will require a higher subsidy to forgo a pull as opposed to a message. We can then construct an augmented graph that allows us to compare the cumulative subsidy required for the arms represented by directed edge $(u,v)$ to forgo a \emph{pull} and \emph{message}, respectively to those required by other directed edges $\in G$ (including, importantly, the inverse action-pair implied by edge $(v,u)$).  

\subsection{\textsc{Greta}: A Graph-aware Heuristic}
\label{sec:algDetails}
\par \textbf{Set-up}: We begin by building an augmented graph, $G^\prime$. This graph contains every vertex and edge in $G$, along with a dummy vertex, $-1$, and directed edge $(u,-1) ~\forall u \in V$. This lets us map each directed edge $(u,v)$ in $G$ to the action pair $(\texttt{pull}, ~\texttt{message})$, and $(u,-1)$ to $(\texttt{pull}, ~\texttt{no-act})$. We also construct an augmented arm set, $[n] \cup \{-1\}$, and extend our bijective mapping from arms to vertices such that $\upvarphi: -1 \mapsto -1$. Appendix~\ref{app:constructGprime} provides pseudocode.

Next, we pre-compute the Whittle index for each vertex-active action combination $(v,\alpha) \in V^\prime \times \mathcal{A} \setminus \{0\}$. When we compute the Whittle index for a given $(v, \alpha)$ pair, we seek the infimum subsidy, $m$, required to make arm $i$ (i.e., $\upvarphi^{-1}(v)$) indifferent between passivity (i.e., $\texttt{no-act}$) and receipt of action $\alpha$ at time $t$~\citep{whittle1988restless}.
We cannot compute the Whittle index for our placeholder $-1$ vertex because it is not attached to an MDP, so we map it to 0.
\begin{algorithm}[H]
\caption{Compute Whittle indices for $V^\prime \times \mathcal{A} \setminus \{0\}$}
\label{alg:whittle}
\footnotesize 
\begin{algorithmic}[1] 
\Procedure{\textsc{Whittle}}{$V^\prime, \alpha \in \{1,2\}, \upvarphi$}
\State $\lambda := i \mapsto
  \begin{cases}
    0, & \text{if } i = -1\\ 
    \inf_m \{m \mid V_m(s_t^i, a_t^i = 0) \geq \\\hspace{1.5cm} V_m(s_t^i, a_t^i = \alpha)\}, & \text{otherwise} \\
  \end{cases}$
\State\Return$W_\alpha \gets \{\lambda \circ \upvarphi^{-1}(v) \ | \ v \in V^\prime \}$
\EndProcedure
\end{algorithmic}
\end{algorithm} 

{\small
\begin{equation}
\begin{aligned}
    \label{eqn:valueFunc}
    V_m(s_t^i) = 
    \max \begin{cases}
    m + r(s_t^i) + \beta V_m \left(s_{t+1}^i\right) & \textit{no-act} \\
    \begin{array}{r@{}}
        r(s_t^i)+\beta [s_t^i V_m\left(P^{\alpha}_{1,1}\right) + \\
        (1-s_t^i) V_m\left(P^{\alpha}_{0,1}\right)]  
      \end{array} & \textit{$\alpha$}
    \end{cases}
    \end{aligned}
\end{equation}}
The value function represents the maximum expected discounted reward that arm $i \in [n]$ with state $s_t^i$ can receive at time $t$ given a subsidy $m$, discount rate $\beta$, and active action $\alpha \in \{1,2\}$.

\textbf{\alg}: With our augmented graph and Whittle index values in hand, we now present our algorithm. We provide pseudocode 
 in Algorithm~\ref{alg:heuristicClean}, and structure our exposition sequentially. At each timestep $t \in T$, \alg{} takes as \emph{input}: (1) an augmented set of restless arms, $[n] \cup \{-1\}$ embedded in an augmented directed graph, $G^\prime = (V^\prime, E^\prime)$; (2) a budget, $B \in \mathbb{R}$; (3) a cost function, $C: \mathcal{A} \rightarrow \mathbb{R}$; (4) a message cost, $\psi \in [0,1)$; and (5) a set of Whittle index values per active action $\alpha \in \{1,2\}$, denoted by $W_1$ and $W_2$, respectively. 
Given these inputs, \alg{} \emph{returns} a reward-maximizing, constraint-satisfying action vector, $\vec{a}_t$.

\begin{algorithm}[!h]
\caption{\textsc{Greta}: graph-aware, Whittle-based heuristic\\ Note: all sorts are descending; arrays are zero-indexed.}
\label{alg:heuristicClean}
\footnotesize
\begin{algorithmic}[1] 
\Procedure{\textsc{Greta}}{$G^\prime, V^\prime, E^\prime, B, C, \psi, W_1, W_2$}
\State $\vec{a}_t \gets 0^{|V|}$ 
\State $B^\prime \gets B$ 
\While {$\vee_{e \in E^\prime}$ \Call{GetCost}{$u,v,\vec{a}_t,C$} $\leq B^\prime \land E^\prime \neq \emptyset$}
\State \texttt{b} $\gets \min(B^\prime,2)$
\algLC{3}{Consider only pulls}
\State $\hat{a}_{2}, \nu_2 \gets$ \Call{PullOnly}{$E^\prime,\lfloor{\texttt{b}\rfloor}, W_2$}
\algLC{3}{Consider pulls \emph{and} messages}
\State $\hat{a}_{(1,2)}, \nu_{(1,2)}, E^\prime_{\oslash} \gets$ \Call{MP}{$G^\prime, \texttt{b}, C, \psi, \vec{a}_t, W_1, W_2$}
\algLC{3}{Select max-val candidate actions; update $\vec{a}_t, B^\prime, G^\prime$}
\If {$\nu_2 \geq \nu_{(1,2)}$}
\State $\vec{a}_t, B^\prime \gets$ \Call{ModActsB}{$G^\prime, C, \hat{a}_{2}, \vec{a}_t, B^\prime$}
\State $E^{\prime}, G^{\prime} \gets$ \Call{\textsc{UpdateG}}{$V^{\prime}, E^{\prime}, \hat{a}_{2}, \emptyset$}
\Else 
\State $\vec{a}_t, B^\prime,  \gets$ \Call{ModActsB}{$G^\prime, C, \hat{a}_{(1,2)}, \vec{a}_t, B^\prime$}
\State $E^{\prime}, G^{\prime} \gets$ \Call{\textsc{UpdateG}}{$V^{\prime}, E^{\prime}, \hat{a}_{(1,2)}, E^\prime_{\oslash}$}
\EndIf 
\EndWhile 
\State\Return $\vec{a}_t$
\EndProcedure
\end{algorithmic}
\end{algorithm}
In lines 2-3 of Algorithm~\ref{alg:heuristicClean}, we 
initialize $\vec{a}_t$ such that each vertex is mapped to 0 (\texttt{no-act}), and set our \emph{remaining budget} variable, $B^\prime$, equal to the per-timestep budget, $B$. 

In lines {4-13}, we iteratively update our action vector  $\vec{a}_t$ until 
we have insufficient remaining budget to afford any available edge-action pair, or our augmented edge set, $E^\prime=~\emptyset$. The termination check in line {4} requires us to: (1) check if we've already incurred the cost of a \emph{pull} or \emph{message} (\emph{message}) for vertex $u$ ($v$); and (2) offset accordingly when we compute the cost of $(a_t^u = 2, a_t^v = 1)$, per Alg.~\ref{alg:getCost}.

\begin{algorithm}[!h]
\caption{Compute cost to pull $u$ and message $v$}
\label{alg:getCost}
\footnotesize
\begin{algorithmic}[1] 
\Procedure{\textsc{GetCost}}{$u, v, \vec{a}_t, C$}
\State $c_{u} \gets C(2)(1-\mathds{1}(a^u_t > 0)) + \mathds{1}(a^u_t = 1)(C(2) - C(1))$
\State $c_{v} \gets C(1)(1 - \mathds{1}(a^v_t =1 \lor v = -1))$\\
\Return $c_u + c_v$
\EndProcedure
\end{algorithmic}
\end{algorithm}
The subroutines called in lines 6-7 of \alg{} serve to ensure that we will only deviate from the pull-assignment choices of graph-agnostic \textsc{Threshold Whittle}---i.e., by considering a combination of pulls \emph{and} messages---when it is strictly beneficial to do so. 

Since pulls have unit cost, and $\psi \in [0,1)$, we consider our per-timestep budget in sequential chunks of 2. We have two options for allocating each chunk over actions: (1) considering \emph{only} pulls, and selecting the two arms with highest $W_2$ index values; or (2) considering messages \emph{and} pulls, and selecting the set of directed $(u,v)$ edges with highest edge-level subsidies such that each $u$ receives a pull, and each $v$ (excluding $-1$) receives a message. In lines 8-13, we select the candidate action set with the highest cumulative subsidy, and update $\vec{a}_t, B^\prime, \text{ and } G^\prime$ accordingly.

\par \textbf{Pulls only}: Allocation option (1) maps arms who have yet to receive a pull at time $t$ to candidate actions $\in \{0,2\}$ by sorting their $W_2$ index values in descending order and selecting the top-2 arms to receive pulls. App.~\ref{app:pseudocode} gives pseudocode (Alg~\ref{alg:pullsOnly}).

\par \textbf{Messages \emph{and} pulls}: Allocation option (2) maps arms to candidate actions by computing an edge index value for each directed edge $\in E^\prime$. Algorithm~\ref{alg:msgPull} provides pseudocode.
\begin{algorithm}[H]
\caption{Cumulative subsidy of max pull-message set\\ Note: all sorts are descending; arrays are zero-indexed.}
\label{alg:msgPull}
\footnotesize 
\begin{algorithmic}[1] 
\Procedure{\textsc{MP}}{$G^\prime,\texttt{b} \in \mathbb{R}, C,\psi, \vec{a}_t, W_1, W_2$}
\State $G^{\dprime} = (V^{\dprime}, E^{\dprime}) \gets G^\prime$ 
\State $\hat{a}_{(1,2)}: v \in V^{\dprime} \mapsto \vec{a}_t^v$ 
\State $f: (u,v) \in E^{\dprime} \mapsto \mathbb{R}$
\State $E^\prime_{\oslash} \gets \emptyset$
\State $\nu_{(1,2)} \gets 0$
\While {$\vee_{e \in E^{\dprime}}$ \Call{GetCost}{$u,v,\hat{a},C$} $\leq \texttt{b} \land E^{\dprime} \neq \emptyset$}
\For {$u \in V^{\dprime} \setminus \{-1\}$}
\State $\calNout^\prime(u) \gets \left\{v | (u,v) \in E^{\dprime} \land \hat{a}_{(1,2)}^v = 0 \right\}$
\State \Call{EdgeIndices}{$f^{\dprime}, u, \calNout^\prime(u), \texttt{b}, \psi, W_1, W_2$}
\EndFor
\State \texttt{values} $\gets \textsc{sort}(\{f((u,v)) | (u,v) \in E^{\dprime}\})$
\If {$|\texttt{values}| = 0$}
\State \algorithmicbreak
\EndIf 
\For {$f((u,v)) \in \texttt{values} $}
\State \texttt{cost}$_{u,v} \gets $ \Call{ComputeCost}{$u,v,\hat{a}_{(1,2)}, C$}
\If {\texttt{cost}$_{u,v} \leq \texttt{b}$}
\State $h : u \mapsto 2; v \mapsto 1$
\State $\hat{a}_{(1,2)}, \texttt{b}  \gets$ \Call{ModActsB}{$G^{\dprime}, C, h, \hat{a}_{(1,2)}, \texttt{b}$}
\State $E^{\dprime}, G^{\dprime} \gets$ \Call{\textsc{UpdateG}}{$V^{\dprime}, E^{\dprime}, \hat{a}_{(1,2)}, \emptyset$}
\State $\nu_{(1,2)} \mathrel{{+}{=}} f((u,v))$
\State $E^\prime_{\oslash} \gets E^\prime_{\oslash} \cup \{(u,v)\}$
\State \algorithmicbreak
\EndIf
\EndFor
\EndWhile 
\algLC{1}{Return best arm-actions, cumulative subsidy, $E^\prime_{\oslash}$}
\State\Return $\hat{a}_{(1,2)}, \nu_{(1,2)}, E^\prime_{\oslash}$
\EndProcedure
\end{algorithmic}
\end{algorithm}
In line {2} of Algorithm~\ref{alg:msgPull}, we start by defining $G^{\dprime}$ to be a local copy of our augmented graph, $G^{\prime}$. We then create a function, $\vec{a}_{(1,2)}$ to map each vertex $v \in V^{\dprime}$ to its candidate action, which we initialize to be $\vec{a}_t^v$ (line {3}). We do this because we require the current $G^\prime$ to determine which $(\texttt{pull}_u,\texttt{message}_v)$ edge-action combinations are possible, and for $\vec{a}_t$ to correctly compute the cost of these hypothetical actions, but we don't want to modify $\vec{a}_t$ or $G^\prime$ in-place. Next, in lines {4-5}, we define a function, $f$ that maps each edge $(u,v) \in E^\prime$ to a real-valued edge index value, and a set, $E^\prime_{\oslash}$, to hold the edges we will need to remove from $G^\prime$ if we select the candidate actions returned by Algorithm~\ref{alg:msgPull}. In line {6}, we initialize $\nu_{(1,2)} = 0$ to represent the cumulative subsidy of our candidate action set.

In lines {7-22} of Algorithm~\ref{alg:msgPull}, we iteratively update our candidate action function, $\hat{a}_{(1,2)}$, until we run out of (small-\texttt{b}) budget, or $E^{\dprime} = \emptyset$. Inside each iteration of the \textsc{while}-loop, we begin by computing an edge index value for each directed edge $(u,v) \in E^\prime$ (lines 8-10). To do this, we loop over vertices in $V^\prime \setminus \{-1\}$ (line 8), and for each vertex $u$, let $\calNout^\prime(u) \subseteq \calNout(u)$ represent the subset of $u$'s one-hop out-degree neighbors currently slated to receive a \texttt{no-act} at time $t$. 

For each edge $(u,v) \in \calNout^\prime(u)$, our edge index value represents the cumulative subsidy required to forgo a \emph{pull} for arm $u$ (i.e., $W_2^u$) and a \emph{message} for arm $v$ (i.e., $W_1^v$). Note: if we pull $u$, message $v$, and have budget left over, we can message up to $|\mathcal{M}^u_t|$ vertices $v^\prime \in \calNout^\prime(u)$ at time $t$ \emph{without} incurring additional pull costs, where $|\mathcal{M}^u_t|=|\calNout^\prime(u)|$ if $\psi=0$ and $ \min(\lfloor{\nicefrac{\texttt{b}}{\psi}\rfloor}, |\calNout^\prime(u)|) \text{ for } \psi \in (0,1)$.

To exploit this diminishing marginal cost, we sort $u$'s, neighbors by their index-values and let the max-valued edge represent the cumulative, cost-feasible value of $\calNout^{\prime}(u)$, rather than just $(u,v)$. Algorithm~\ref{alg:computeIndices} provides pseudocode.
\begin{algorithm}[H]
\caption{Compute edge index values \\ Note: all sorts are descending; arrays are zero-indexed.}
\label{alg:computeIndices}
\footnotesize
\begin{algorithmic}[1] 
\Procedure{\textsc{EdgeIndices}}{$f, u, \calNout^{\prime}(u), \texttt{b}, \psi, W_2, W_1$}
\State \texttt{n\_msgs}$\gets |\calNout^{\prime}(u)|$ \IfElse{$\psi = 0$}{$\min(\lfloor{\nicefrac{\texttt{b}}{\psi}\rfloor}, |\calNout^{\prime}(u)|)$}
\State $\texttt{msg\_values} \gets \textsc{sort}(g: v \in \calNout^{\prime}(u) \mapsto W_1^v)$
\State \texttt{max\_edge} $\gets (u, \argmax_v \texttt{msg\_values})$
\For {$v \in \calNout^{\prime}(u)$}
\If {$(u,v) = $\texttt{ max\_edge}}
\State $f((u,v)) \gets W_2^u + \sum_{i=0}^{\texttt{n\_msgs}-1} \texttt{msg\_values}_i$
\Else\State $f((u,v)) \gets W_2^u + W_1^v$
\EndIf
\EndFor
\State\Return \algC{$f$ is updated in-place}
\EndProcedure
\end{algorithmic}
\end{algorithm}

Then, in lines {11-13} of Algorithm~\ref{alg:msgPull}, we sort the edge-index values in descending order. Note that we break if there are no values to be sorted; this corresponds to the scenario in which no additional pulls are available/cost-feasible, and every arm not receiving a pull is already receiving a message, but we still have budget left---i.e., when $\psi=0$. In lines {14-22}, we choose the top cost-feasible edge-action pair from our sorted list, and update our candidate action function, $\vec{a}_{(1,2)}$ and local budget, $\texttt{b}$ accordingly. Note that if $\psi=0$ and arm $u$ receives a pull, we message every $v \in \calNout^{\prime}(u)$. App.~\ref{app:pseudocode} provides pseudocode for the \textsc{ModActsB} subroutine (see  Algorithm~\ref{alg:updateActionsBudget} in Appendix~\ref{app:updateActionsB}).

Finally, we update our local copy of the augmented graph by removing $(u,v)$, as well as any directed edge terminating in $u$, and the placeholder edge, $(u,-1)$. This is because: (a) we do not want to reconsider the edge-action pair we've selected; and (b) by virtue of how we select $(u,v)$, $f((u,v)) \geq f((\cdot,u))$ \emph{or} any such $u$-terminating edge is cost-prohibitive.
App.~\ref{app:pseudocode} provides pseudocode for the \textsc{UpdateG} subroutine (see Algorithm~\ref{alg:updateGprime}). We conclude the \textsc{MP} subroutine (Algorithm~\ref{alg:msgPull}) by returning our candidate action function, $\hat{a}_{(1,2)}$, the associated cumulative subsidy value, $\nu_{(1,2)}$, and the set of candidate edges to be removed from $G^\prime, E^\prime_{\oslash}$. 

\par \textbf{Putting the pieces together}: With the exposition of each of \alg's subroutines complete, we now return to lines {8-13} of Algorithm~\ref{alg:heuristicClean}. We compare the cumulative subsidy values returned by the \textsc{PullOnly} and \textsc{MP} subroutines, and use the candidate action function associated with the maximum cumulative subsidy to update our action vector, $\vec{a}_t$, remaining budget, $B^\prime$, and augmented graph, $G^\prime$. When the \textsc{while}-loop terminates, we return $\vec{a}_t$. By virtue of how this action vector is constructed, it is reward-maximizing and guaranteed to satisfy the budget constraint. 

\subsection{Theoretical Analysis}
\label{sec:theoretical}
\par \textbf{Bounding expected reward}: Per Theorem~\ref{thm:expectedReward}, the expected cumulative reward of \alg{} with message cost, $\psi >0$, will be lower-bounded by that of graph-agnostic \textsc{Threshold Whittle}, and upper-bounded by \alg{} with $\psi=0$. See Appendix~\ref{app:proofs} for a complete proof. 
\begin{restatable}[]{theorem}{expectedreward}\label{thm:expectedReward}
For a given set of $[n]$ restless or collapsing arms with transition matrices satisfying the structural constraints outlined in Section~\ref{sec:networkedRmab}, corresponding directed graph, $G=(V,E)$, budget $B \in \mathbb{R}_{\geq 0}$, non-decreasing local reward function, $r: \mathcal{S} \rightarrow \mathbb{R}$, cumulative reward function, $R$, and cost vector $\vec{c} = [0,\psi,1]$ such that $\psi \in [0,1)$, we have: $\mathbb{E}_{\textsc{TW}}[R] \leq \mathbb{E}_{\textsc{GH}, \psi > 0}[R] \leq \mathbb{E}_{\textsc{GH}, \psi = 0}[R]$
\end{restatable}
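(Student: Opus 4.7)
The plan is to prove the two inequalities separately, using coupling arguments grounded in the structural constraints from Section~\ref{sec:networkedRmab} (constraint set (ii) in particular) and the explicit max-subsidy choice rule embedded in \alg{}.

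For the lower bound $\mathbb{E}_{\textsc{TW}}[R] \leq \mathbb{E}_{\textsc{GH}, \psi > 0}[R]$, the key observation is that \alg's \textsc{PullOnly} subroutine reproduces \textsc{Threshold Whittle}'s decisions exactly: both sort arms by $W_2$ index values and greedily allocate pulls within budget. At each iteration of the outer \textsc{while}-loop, \alg{} picks the larger of $\nu_2$ (pull-only) and $\nu_{(1,2)}$ (pull+message), so its cumulative Whittle subsidy in each 2-unit budget chunk weakly dominates TW's. I would translate this subsidy inequality into an expected-reward inequality via the standard identification of the Whittle index with the marginal expected discounted reward from taking the active versus passive action, as encoded by the value function in Equation~\ref{eqn:valueFunc}. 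Coupling the two policies to the same initial state $\mathbf{s}_0$ and inducting on timesteps then yields the claimed bound.

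For the upper bound $\mathbb{E}_{\textsc{GH}, \psi > 0}[R] \leq \mathbb{E}_{\textsc{GH}, \psi = 0}[R]$, I would use an action-level coupling. Given the action vector $\vec{a}_t^{(\psi>0)}$ chosen by \alg{} at timestep $t$, I construct a reference vector $\vec{a}_t^{\text{ref}}$ that (i) retains the same pulls and (ii) additionally messages every out-neighbor of a pulled arm that is not already receiving a message; with $\psi=0$ this augmentation incurs zero marginal cost and is therefore budget-feasible. By structural constraint (ii), the extra messages weakly increase each affected arm's probability of transitioning to (or remaining in) state $1$, so the cumulative expected discounted reward under $\vec{a}_t^{\text{ref}}$ weakly dominates that under $\vec{a}_t^{(\psi>0)}$. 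Because \alg{} with $\psi=0$ greedily selects the max-subsidy feasible vector, the vector $\vec{a}_t^{(\psi=0)}$ it actually chooses must achieve cumulative subsidy (and hence expected reward) at least as large as $\vec{a}_t^{\text{ref}}$; again inducting over timesteps closes the bound.

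The hard part will be rigorously connecting cumulative Whittle subsidy to cumulative expected discounted reward, since this correspondence is exact only under the Lagrangian relaxation that decouples arms---a condition violated by our message-eligibility constraint, which couples arm $i$'s feasible action space to whether some $i' \in \calNin(i)$ is pulled. I anticipate discharging this obstacle in two ways: for the lower bound, by invoking the asymptotic optimality of Whittle-based policies (Weber and Weiss) and treating the per-iteration subsidy advantage as a monotone per-arm contribution; and, for the upper bound, by bypassing subsidies entirely and arguing reward-monotonicity directly from structural constraint (ii), which gives a self-contained pointwise-per-trajectory argument. Care will also be required in handling the edge case where the leftover budget in a chunk is strictly less than $2$, where \alg{} still calls \textsc{PullOnly} and \textsc{MP} but with \texttt{b} $<2$.
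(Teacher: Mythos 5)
Your overall skeleton matches the paper's proof: the first inequality rests on the fact that \textsc{PullOnly} reproduces \textsc{Threshold Whittle}'s per-chunk pull choices and that \alg{} only deviates when $\nu_{(1,2)} > \nu_2$; the second rests on reward monotonicity in the action (Remark~\ref{remark:expLocalRNonDecinA}) together with the fact that $\psi=0$ unlocks at least as many messages per pull ($|\mathcal{M}^u_t| = |\calNout^{\prime}(u)|$ versus $\min(\lfloor \nicefrac{\texttt{b}}{\psi}\rfloor, |\calNout^{\prime}(u)|)$). The paper's proof goes essentially no further than this, so at that level you have reproduced its argument, and you have correctly identified its genuine weak point: a larger cumulative Whittle subsidy is not the same thing as a larger expected reward, since the index is an indifference subsidy rather than a marginal reward.

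However, the two devices you propose for closing that gap do not work. Invoking Weber--Weiss asymptotic optimality cannot help: that result concerns the limit $n \to \infty$ with proportional budget and compares a Whittle policy to the optimal policy; it gives no per-instance dominance between two index-based heuristics and does not convert ``larger per-chunk subsidy'' into ``larger expected reward.'' (The paper does not close this gap either---it simply asserts that \alg{} deviates ``when it is strictly advantageous to do so''---so your diagnosis is right but your repair is not a repair.) Likewise, the last step of your upper-bound argument---that \alg{} with $\psi=0$ must weakly dominate your reference vector $\vec{a}_t^{\text{ref}}$ because it ``greedily selects the max-subsidy feasible vector''---fails for a greedy heuristic: \alg{} maximizes subsidy chunk-by-chunk rather than globally, and even a globally subsidy-maximal vector need not be reward-maximal, which reintroduces exactly the gap you flagged. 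The paper's (still informal) route instead leans on the structural fact that when $\psi=0$, \textsc{ModActsB} messages \emph{every} eligible out-neighbor of each pulled arm at zero cost, so that---under the implicit assumption that the same pulls are selected---the executed $\psi=0$ action vector pointwise weakly dominates the $\psi>0$ one and Remark~\ref{remark:expLocalRNonDecinA} applies arm by arm; if you want a self-contained argument, that is the step to formalize rather than the detour through a reference vector that \alg{} is then claimed to beat.
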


\begin{proofsketch}
The \emph{first inequality} follows from how \textsc{Greta} constructs each $\vec{a}_t$.
The \emph{second inequality} follows from the fact that: (a) per our structural constraints and choice of $r$, $E[r^i_{t}|s^i_t,a^i_t]$ is strictly increasing with $a^i_t \ \forall i, t$; and (b) for $\psi=0$, we can message at least as many arms as when $\psi>0$.
\end{proofsketch}

\par \textbf{Computational complexity}: Per Theorem~\ref{thm:timeComplexity}, \alg{} is efficiently computable in time polynomial in its inputs; see Appendix~\ref{app:proofs} for a complete proof.

\begin{restatable}[]{theorem}{timecomplexity}\label{thm:timeComplexity} 
 For convenience, let: $\xi = \mathds{1}(\psi > 0) \times \min(|E^{\prime}|^2, \lfloor{\frac{B}{\psi}\rfloor}|E^\prime|) + \mathds{1}(\psi = 0) \times~|V^{\prime}||E^\prime|$. 
Then, for $\psi \in [0,1)$ and time horizon, $T$, the time complexity of \textsc{Greta} is:
\small{
\begin{equation*}
\begin{cases}
		O\left(\max\left(\xi^2|V^\prime|^2\log|V^\prime|, \  \xi^2|V^\prime||E^\prime|^2 \right)T \right), \quad \text{if $\psi > 0$}\\
            O\left(\max\left(\xi^2|V^\prime|^2\log|V^\prime|, \ \xi^2|V^\prime||E^\prime|^2, \ \xi^2|V^\prime|^2|E^\prime|  \right)T \right), \\ \quad \text{otherwise}
		 \end{cases}
\end{equation*}}
\end{restatable}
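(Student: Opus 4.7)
\begin{proofsketch}[Proposal]
The plan is to derive per-timestep work by analyzing the algorithm bottom-up, bounding each subroutine's cost, then the inner while loop of \textsc{MP} (Alg.~\ref{alg:msgPull}), then the outer while loop of \textsc{Greta} (Alg.~\ref{alg:heuristicClean}), and finally multiplying by the horizon $T$. Throughout, we split on $\psi>0$ versus $\psi=0$, since the two regimes give fundamentally different bounds on the number of edge selections.

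\textbf{Step 1 (subroutines).} First I would tabulate: \textsc{GetCost} and \textsc{ModActsB} run in $O(1)$; \textsc{UpdateG} in $O(|V^\prime|+|E^\prime|)$; \textsc{PullOnly} in $O(|V^\prime|\log|V^\prime|)$ by sorting $W_2$; and \textsc{EdgeIndices} for a single $u$ in $O(d_u\log d_u)$, hence $O(|E^\prime|\log|V^\prime|)$ when summed over all $u\in V^\prime\setminus\{-1\}$. Within one pass of \textsc{MP}'s while body, the subsequent sort of edge values costs $O(|E^\prime|\log|E^\prime|)$, the scan for a cost-feasible edge costs $O(|E^\prime|)$, and the \textsc{UpdateG} call costs $O(|V^\prime|+|E^\prime|)$. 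So one iteration of the \textsc{MP} while-loop performs $O\!\bigl(|V^\prime||E^\prime|\log|V^\prime|\bigr)$ work (dominated by re-running \textsc{EdgeIndices} over all vertices).

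\textbf{Step 2 (iteration counts and the $\xi$ factor).} The crux is bounding how many edge selections can occur. For the inner \textsc{MP} loop, each iteration either (a) removes at least one edge from $E^{\dprime}$, or (b) spends at least $\psi$ from the local chunk $\texttt{b}\le 2$, so it runs at most $\min(|E^\prime|, \lfloor 2/\psi\rfloor)$ times per call when $\psi>0$. For the outer \textsc{Greta} loop, the same two-sided argument applies globally: each outer iteration removes $\geq 1$ edge from $G^\prime$ or consumes $\geq 1+\psi$ from $B^\prime$, giving at most $\min(|E^\prime|, B/(1+\psi))$ outer passes. Combining these two bounds multiplicatively gives a total number of edge selections bounded by $\min(|E^\prime|^2,\lfloor B/\psi\rfloor|E^\prime|)$, which is exactly $\xi$ for $\psi>0$. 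For $\psi=0$, the ``free-messaging'' branch in \textsc{ModActsB} lets a single selection flag every $v\in\calN_{\text{out}}^\prime(u)$ as messaged at no extra cost, so the global count of ``work units'' touches up to $|V^\prime||E^\prime|$, recovering $\xi$ in that case.

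\textbf{Step 3 (assembly and main obstacle).} With total edge selections bounded by $\xi$, I would multiply by the per-iteration work identified in Step 1. The term $\xi\cdot|V^\prime|\log|V^\prime|$ comes from the \textsc{PullOnly} sorts and Whittle-index sorts that accompany each outer pass, and the term $\xi\cdot|V^\prime||E^\prime|$ comes from \textsc{EdgeIndices} plus \textsc{UpdateG} inside each \textsc{MP} pass. The $\xi^2$ appearing in the statement arises because certain bookkeeping costs (re-sorting after every selection, and, in the $\psi=0$ case, re-examining all out-neighbors of every vertex touched) must be paid once per selection for both loop levels; formalizing this ``second factor of $\xi$'' is the main obstacle, because care is needed so as not to double-count budget spent or edges removed across the outer/inner boundary. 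Finally I would multiply by $T$, since the entire routine is invoked at each timestep, and verify that the $\psi=0$ branch acquires the extra $\xi^2|V^\prime|^2|E^\prime|$ term by tracing the free-messaging expansion through the \textsc{UpdateG} cost. I expect the rest (summing the max of these three terms inside a big-$O$) to be routine.
\end{proofsketch}
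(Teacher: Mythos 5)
Your proposal is correct and follows essentially the same route as the paper's proof: a bottom-up tabulation of subroutine costs, the same ``each iteration either removes an edge from $E^\prime$ or consumes at least $\psi$ of budget'' counting argument applied first to the inner \textsc{MP} loop (yielding $\min(|E^\prime|, \lfloor 2/\psi\rfloor)$) and then to the outer \textsc{Greta} loop with $B$ in place of the chunk $\texttt{b}=2$, whose product yields the $\xi^2$ factor, with the $\psi=0$ regime handled separately via the free-messaging expansion. The ``second factor of $\xi$'' you flag as the main obstacle is resolved in the paper exactly as you anticipate, and the minor discrepancies in your per-iteration accounting (e.g., charging \textsc{ModActsB} at $O(1)$ rather than $O(|V^\prime|^2)$ when $\psi=0$, and attributing the extra $\xi^2|V^\prime|^2|E^\prime|$ term to \textsc{UpdateG} rather than to the neighbor loop in \textsc{ModActsB}) only make your bound tighter or equal, so the stated upper bound still follows.
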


These bounds indicate that \alg{} is well-suited for sparse graphs and values of $\psi = 0$ or $\psi \rightarrow 0.5$ ($\psi > 0.5$ will also improve runtime, but may reduce opportunities to exploit externalities). Conversely, pathological cases will include large-scale dense graphs and values of the message cost, $\psi$, which approach but do not equal 0. We consider improving scale to be a valuable direction for future work. The combinatorial nature of the problem we consider suggests that sampling and/or distributed methods will be critical in this regard~\citep{zhou2020graph, almasan2022deep}.

\section{Experimental Evaluation}
\label{sec:experimental}
In this section, we demonstrate that \alg{} consistently outperforms a set of robust graph-agnostic and graph-aware comparison policies. We begin by identifying the set of policies we compare against, as well as our evaluation metrics, graph generation, and mapping of arms to vertices. We proceed to present results from three experiments: (1) \alg{} versus the optimal policy (for small $n$); (2) \alg{} versus comparison policies for a fixed cohort and graph; and (3) \alg{} evaluated on a series of different budgets, message costs, and graph topologies. 

\subsection{Experimental Setup}
\label{sec:experimentalSetup}

\par \textbf{Policies}:
In our experiments, we compare the policy produced by \alg{} against a subset of the following graph-$\{$agnostic$^{\oslash}$ and aware$^{\dagger}\}$ policies:

\begin{table}[!h]
\centering
\resizebox{\columnwidth}{!}{%
\begin{tabular}{|c|l|}
\hline
\begin{tabular}[c]{@{}c@{}}\textsc{Threshold} \\ \textsc{Whittle}  \textsc{(TW)\grAgnostic}\end{tabular} &
  \begin{tabular}[c]{@{}l@{}}Compute Whittle index values using pull as (only) active \\ action. Pull $\lfloor{B\rfloor}$ arms with highest Whittle index values; \\ all others get no-act  \citep{whittle1988restless, mate2020collapsing}.\end{tabular} \\ \hline
\begin{tabular}[c]{@{}c@{}} \textsc{Random\grAware}\end{tabular} &
  \begin{tabular}[c]{@{}l@{}}Construct $G^\prime$; select budget-feasible edge-action pairs \\ uniformly at random until budget exhausted.\end{tabular} \\ \hline
\begin{tabular}[c]{@{}c@{}} \textsc{Centrality-} \\ \textsc{weighted  Random\grAware}\end{tabular} &
  \begin{tabular}[c]{@{}l@{}}Construct $G^\prime$; select budget-feasible edge-action pairs \\ weighted by out-degree centrality of \texttt{src} vertex 
until \\  budget exhausted.\end{tabular} \\ \hline
\begin{tabular}[c]{@{}c@{}} \textsc{Myopic\grAware}\end{tabular} &
  \begin{tabular}[c]{@{}l@{}}Construct $G^\prime$; sort edge-action pairs  by expected reward \\ at $t+1$. Select cost-feasible pairs until budget exhausted.\end{tabular} \\ \hline
\begin{tabular}[c]{@{}c@{}}
\textsc{Value} \\ \textsc{Iteration (VI)\grAware}\end{tabular} &
  \begin{tabular}[c]{@{}l@{}} 
  Find the optimal policy via value iteration for \emph{system-level} \\ MDP (intractable at scale, but computable for small $|V|$ and $|E|$).\end{tabular} 
  \\ \hline
\end{tabular}%
}
\caption{Comparison policies}
\end{table}

We note that in the restless (but graph-agnostic) setting: (1) \textsc{Random} and \textsc{Myopic} are common baselines. Here, we have extended them to the networked setting. (2) \textsc{Threshold Whittle} represents a state-of-the-art approach. To the best of our knowledge, no additional (efficiently computable) graph-aware policies exist for the novel networked restless bandit setting we propose.

\par \textbf{Objective}: Our optimization task is consistent with assigning equal value to each timestep that any arm spends in the ``desirable'' state. This motivates our choice of a local reward function    $r_t(s_t^t) \coloneqq s_t^i \in \{0,1\}$ and undiscounted cumulative reward function ${R}(r(s))~\coloneqq~\sum_{i \in [N]} \sum_{t\in[T]}~r(s^i_t)$.

\emph{Intervention benefit (IB)}: For each policy, we compute total expected reward, $\mathbb{E}_{\pi}[R(\cdot)]$, by taking the average over 50 simulation seeds. We then compute the intervention benefit as defined in Equation~\ref{eq:ib}, where \textsc{NoAct} represents a policy in which no pulls or messages are executed, and \textsc{GH} represents the policy produced by \alg.
\begin{equation}\label{eq:ib}
\textsc{IB}_{\text{NoAct}, \text{GH}}(\pi) \coloneqq \frac{\mathbb{E}_{\pi}[R_{\pi}(\cdot)] - \mathbb{E}_\text{NoAct}[R(\cdot)]}{\mathbb{E}_\text{GH}[R(\cdot)] - \mathbb{E}_\text{NoAct}[R(\cdot)]}
\end{equation}

\par \textbf{Graph generation}: For each cohort of $n$ restless arms that we consider in our experiments, we use a stochastic block model (SBM) to generate a graph with $|V|=n$ vertices~\citep{holland1983stochastic}. This generator partitions the vertices into blocks and stochastically inserts directed edges, with hyperparameter $p_{in} (p_{out}) \in [0,1]$ controlling the probability that a directed edge will exist between two vertices in the same (different) block(s). 

We consider two options for $\upvarphi: [n] \rightarrow V$: (1) \emph{random}; and (2) \emph{by cluster}. For mapping (1), we generate $\lceil{\frac{n}{10}\rceil}$ blocks of uniform size, and map arms to vertices---and, by extension, to blocks---uniformly at random. This mapping represents allocation settings with a peer support component where participants are randomly assigned to groups, without regard for their behavioral similarity.

For mapping (2), we use an off-the-shelf \textsc{k-means} algorithm to cluster the arms in flattened transition-matrix vector space~\citep{pedregosa2011scikit}. We use the cardinality of the resulting clusters to determine the size of each block and map arms to vertices based on cluster membership. This mapping represents intervention allocation settings with a peer support component where participants with similar transition dynamics are grouped together.

\subsection{\alg{} vs. the Optimal Policy}
\label{sec:gretaVsOptimal}
In this experiment, we compare \alg{} to $\pi^*_{\text{VI}}$, where $\pi^*_{\text{VI}}$ denotes the optimal policy obtained via value iteration for the \emph{system-level} MDP~\citep{sutton2018reinforcement}. This system-level MDP has state space $\mathcal{S}^\prime := \{\mathcal{S}\}^n$, action space $\mathcal{A}^\prime := \{\mathcal{A}\}^n$, a transition function, $P:\mathcal{S}^\prime \times \mathcal{A}^\prime \rightarrow \mathcal{S}^\prime$, and reward function, $R^\prime = \sum_{i \in [n]} s^i$. To ensure budget and neighborhood constraint satisfaction, only cost- and topologically feasible actions, $\mathcal{A}^{\dprime} \subseteq \mathcal{A}^\prime$ are considered. Figure~\ref{fig:gretaVsOpt} reports results for a synthetic cohort of 8 arms embedded in a fully connected graph (i.e., $p_{\text{in}} = p_{\text{out}} = 1.0$). We let $T=120, \psi=0.5$, and report unnormalized $\mathbb{E}_{\pi}[R]$, along with margins of error for 95\% confidence intervals computed
over 50 simulation seeds for values of $B \in \{1,1.5,2,2.5,3\}$. \alg{} outperforms \textsc{TW} for each value of $B$ (with predictably larger gaps for values with remainders $=\psi$ that graph-agnostic \textsc{TW} cannot exploit), and is competitive with respect to $\pi^*_{\text{VI}}$.

\begin{figure}[!h]
\centering
  \includegraphics[width=0.9\columnwidth]{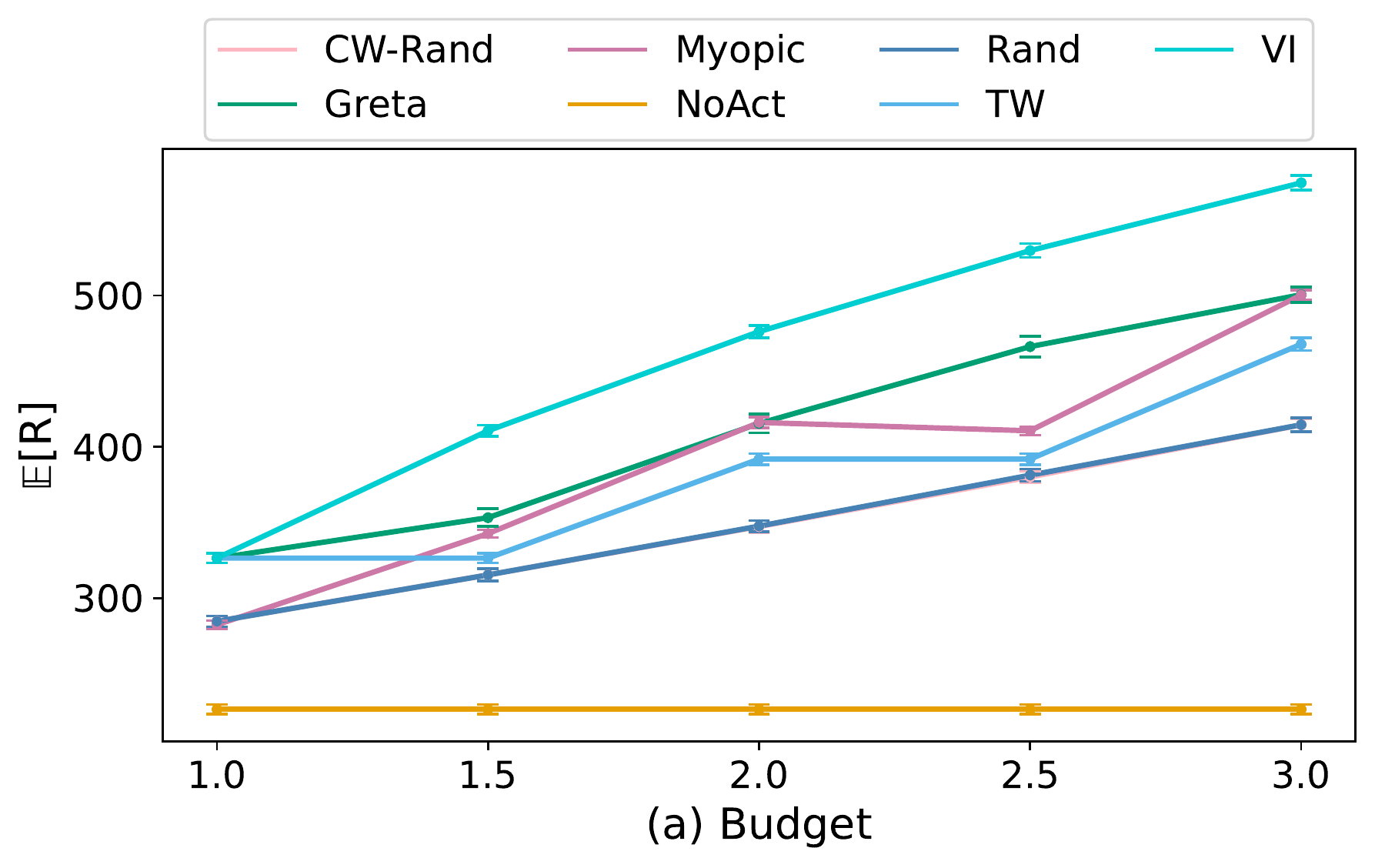}
  \caption{$\mathbb{E}[R]$ by policy and budget}
  \label{fig:gretaVsOpt}
\end{figure}

\subsection{\alg{} vs. Alternative Policies}
Here we compare \alg{} to the graph-agnostic and graph-aware comparison policies outlined in Section~\ref{sec:experimentalSetup}. We consider a synthetic cohort of $n=100$ restless arms whose transition matrices are randomly generated in such a way so as to satisfy the structural constraints introduced in Section~\ref{sec:model}. We use a stochastic block model (SBM) generator with $p_{in} = 0.2$ and $p_{out} = 0.05$, and consider both the \emph{random} and \emph{by cluster} options for $\upvarphi$. We let $T=120$, $B=10$, and $\psi = 0.5$.

In Table~\ref{tab:exp1}, we report results for each mapping-policy combination, along with margins of error for 95\% confidence intervals computed over 50 simulation seeds.

\newcolumntype{V}{!{\vrule width 1pt}}
\begin{table}[H]
\centering
\resizebox{0.7\columnwidth}{!}{%
\begin{tabular}{|c|l V l|}
	\specialrule{1pt}{1pt}{1pt}
	$\upvarphi(i)$& Policy & $\mathbb{E}[\textsc{IB}]$ (\%) ($\pm$) \\
	\specialrule{2.5pt}{1pt}{1pt}
	\multirow{6}{*}{randomly} 
	 &\textsc{Random} & 75.82 ~$\pm$ 0.890 \\
	 &\textsc{CWRandom} & 74.79 ~$\pm$ 1.068 \\
	 &\textsc{Myopic} & 87.83 ~$\pm$ 1.115 \\
	 &\textsc{TW} & 83.57 ~$\pm$ 0.779 \\
	 &\textsc{Greta}  & \textbf{100.00 ~$\pm$ 0.000 } \\	 
	 \specialrule{1.5pt}{1pt}{1pt}
	\multirow{6}{*}{by cluster}
	 &\textsc{Random} & 64.19 ~$\pm$ 0.786 \\
	 &\textsc{CWRandom} & 63.59 ~$\pm$ 0.804 \\
	 &\textsc{Myopic} & 76.24 ~$\pm$ 0.921 \\
	 &\textsc{TW} & 72.65 ~$\pm$ 0.684 \\
	 &\textsc{Greta}  & \textbf{100.00 ~$\pm$ 0.000 } \\
	\specialrule{1.5pt}{1pt}{1pt}
\end{tabular}%
}
\caption{$\mathbb{E}[\text{IB}]$ by choice of $\upvarphi$ and policy} \label{tab:exp1}
\end{table}
Key findings from this experiment include:
\begin{itemize}[leftmargin=1em, itemsep=0.1em, topsep=0ex]
    \item The policy produced by \alg{} achieves significantly higher $\mathbb{E}_\pi[\text{IB}]$ than each of the comparisons. 
    \item The gap in $\mathbb{E}_\pi[\text{IB}]$ between \alg{} and \textsc{Myopic}, which is the best-performing alternative, is
    larger for the \emph{by cluster} mapping than the \emph{random} mapping. This suggests that in assortative networks,
    relatively homogeneous transition dynamics within blocks facilitate the exploitation of the diminishing marginal costs associated with the pull-message dynamic.  
\end{itemize}

\subsection{Sensitivity Analysis}
We conduct sensitivity analysis with respect to: (1) the budget, $B$; (2) the message cost, $\psi$; and (3) the underlying graph topology, via the $p_{\text{in}}$ and $p_{\text{out}}$ hyperparameters of our stochastic block model graph generator. As we vary each of the aforementioned hyperparameters, we consider a fixed cohort size of $n=100$ randomly-generated, structural constraint-satisfying arms, a time horizon, $T=120$, and a mapping $\upvarphi: i \in [n] \mapsto v \in V$ from arms to vertices that is determined by cluster. We report unnormalized $\mathbb{E}_{\pi}[R]$, along with margins of error for 95\% confidence intervals computed over 50 simulation seeds, for \alg{}, \textsc{Threshold Whittle}, \textsc{NoAct}, and \textsc{Myopic}, which is the best-performing non-TW alternative. We describe each task below, and present results in Figure~\ref{fig:sensitivityResults}.

\begin{figure}[!h]
  \centering
  \includegraphics[width=0.92\linewidth]{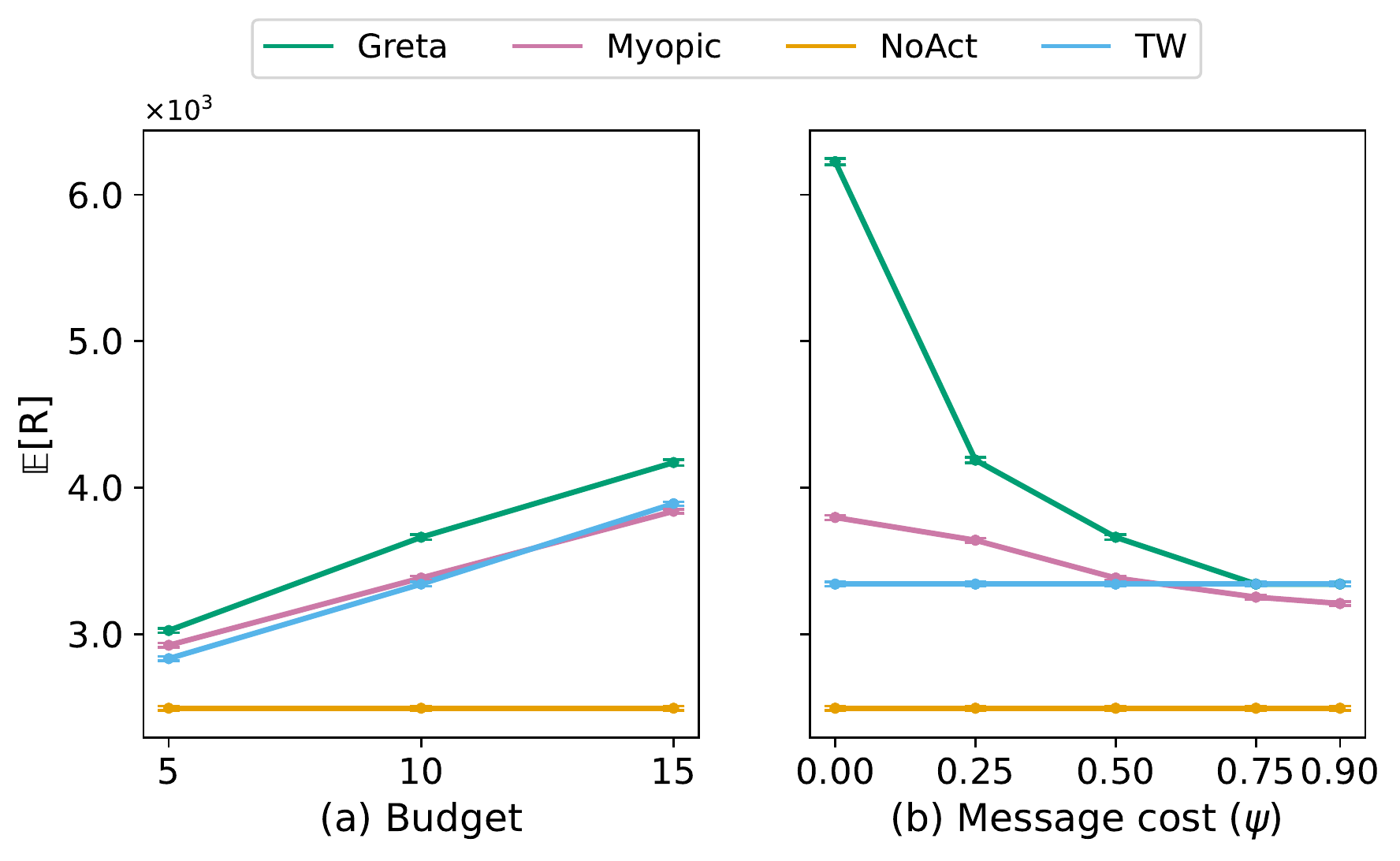}
  \label{fig:srBudgetMsgCost}
  \centering
  \includegraphics[width=0.92\linewidth]{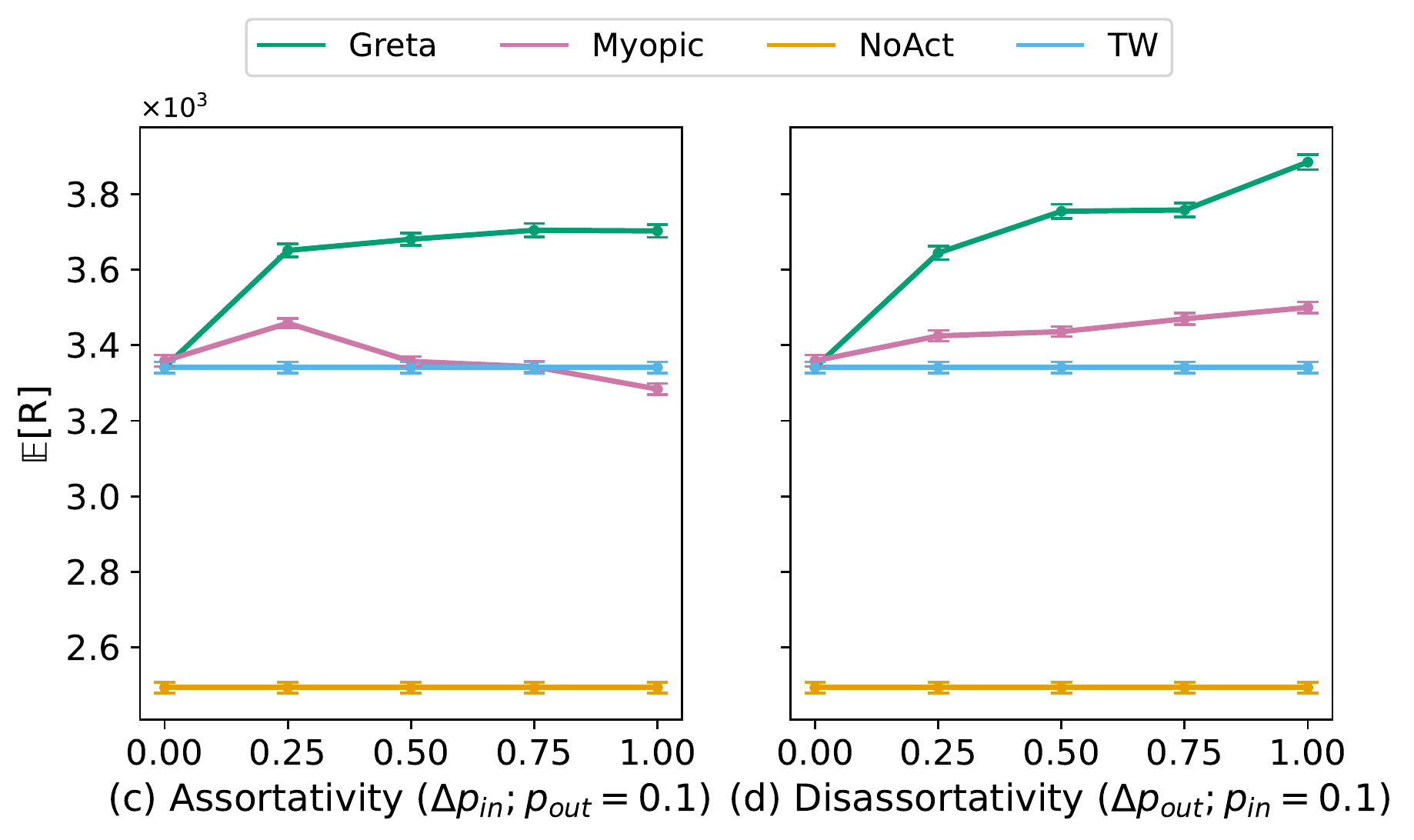}
  \label{fig:srAssortativity}
\caption{Sensitivity results, by varied hyperparameter}
\label{fig:sensitivityResults}
\end{figure}

\par \textbf{Budget}:
We hold message cost fixed at $\psi=0.5$, let $p_{\text{in}} = 0.25$, $p_{\text{out}}=0.05$, and consider values of $B \in \{5\%, 10\%, 15\%\} \text{ of }~n$. As Figure~\ref{fig:sensitivityResults}(a) illustrates, $\mathbb{E}_\pi[R]$ intuitively rises with $B$ for each policy considered. For each value of $B$, we find that \alg{} achieves higher $\mathbb{E}_\pi[R]$ than the comparison policies and that the gap between \alg{} and the best-performing alternative also increases with $B$.

\par \textbf{Message cost}:
Here, we hold the budget fixed at $6$, let $p_{\text{in}}=0.25$, $p_{\text{out}}=0.05$, and consider values of $\psi~\in~\{0.0, 0.25, 0.5, 0.75, 0.9\}$. As Figure~\ref{fig:sensitivityResults}(b) illustrates, $\mathbb{E}_\pi[R]$ decreases as the message cost, $\psi$, increases for \alg{} and \textsc{Myopic}, while it remains constant for active-action agnostic \textsc{NoAct} and message-agnostic \textsc{TW}. For each value of $\psi$ that we consider, \alg{} achieves higher $\mathbb{E}_\pi[R]$ than each of the comparison policies. This gap is intuitively largest when $\psi=0$, and decreases until \alg{} converges with \textsc{TW}---notably, without suffering loss in total expected reward due to divisibility issues with respect to $B$, when $\psi=0.75$. 

\par \textbf{Graph topology}:
We hold the budget fixed at $B=10$, let message cost, $\psi=0.5$, and consider two sets of increasingly \emph{assortative} (\emph{disassortative}) $(p_{\text{in}}$, $p_{\text{out}})$ ordered pairs. In each case, we start with $E=\emptyset$---i.e., $(0.0,0.0)$, and then hold $p_{\text{out}}$ ($p_{\text{in}}$) fixed at $0.1$ and steadily increase $p_{\text{in}}$ ($p_{\text{out}}$). Figure~\ref{fig:sensitivityResults}(c) and (d) present results. For \alg{}, while $\mathbb{E}_\pi[R]$ is generally increasing in the number of edges, the rate of growth levels off as assortativity rises but remains robust as disassortativity rises. This suggests that homophilic clustering of arms with respect to transition dynamics may undermine total welfare by inducing  competition within neighborhoods, while heterophilic clustering can help to smooth out subgroups' relative demand for constrained resources over time.

\section{Conclusion \& Future Work}
\label{sec:conclusion}
In this paper, we introduce networked restless bandits, a novel multi-armed bandit setting in which arms are restless and embedded in a directed graph. We show that this framework can be used to model constrained resource allocation in community settings, where receipt of the resource by an individual can result in spillover effects that benefit their neighbor(s). We also present \alg{}, a graph-aware, Whittle-based heuristic algorithm which is constrained reward-maximizing and budget-constraint satisfying in our networked restless bandit setting. Our empirical results demonstrate that the policy produced by \alg{} outperforms a set of graph-agnostic and graph-aware comparison policies for a range of different budgets, message costs, and graph topologies. Future directions include: (1) relaxing the assumption of perfect observability of transition matrices and/or graph topology; (2) considering individual and/or group fairness; and (3) incorporating sampling and/or distributed methods to improve scalability.

\appendix
\section{Heuristic Policy}
In this section, we provide additional details related to the heuristic policy we introduce in Section~\ref{sec:algorithm}, including pseudocode for subroutines and complete proofs for theorems.

\subsection{Pseudocode}
\label{app:pseudocode}

\subsubsection{Construct $G^\prime$}
\label{app:constructGprime}
Recall from Section~\ref{sec:algDetails} that \alg{} takes as input a pre-constructed augmented graph, $G^\prime = (V^\prime, E^\prime)$, where $V^\prime$ and $E^\prime$ represent augmented vertex and edge sets, respectively (Alg.~\ref{alg:heuristicClean}, line 1). Algorithm~\ref{alg:constructGprime} provides pseudocode for the construction of this augmented graph:
\begin{algorithm}[H]
\caption{Construct $G^\prime$}
\label{alg:constructGprime}
\footnotesize
\begin{algorithmic}[1] 
\Procedure{\textsc{Construct}}{$V, E$}
\State $V^\prime \gets V \cup \{-1\}$
\State $E^\prime \gets E \cup \{(u,-1) | u \in V\}$
\State $G^\prime = (V^\prime, E^\prime)$
\State\Return $V^\prime, E^\prime, G^\prime$
\EndProcedure
\end{algorithmic}
\end{algorithm}

The figure below visually represents this process for an example graph, $G$, and corresponding augmented graph, $G^\prime$:
\begin{figure}[h]
\centering 
\label{fig:constructGprime}
\includegraphics[scale=0.6]{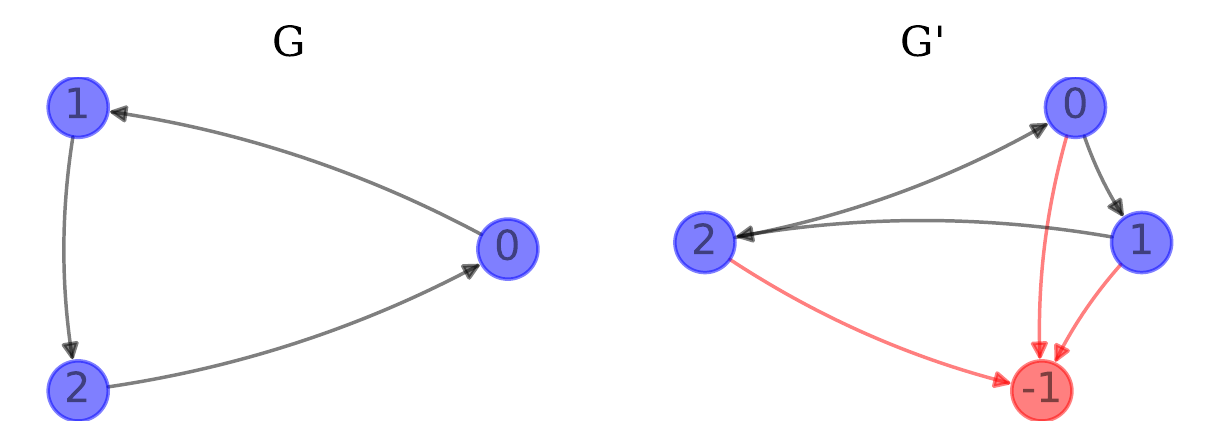}
\end{figure}

\subsubsection{\textsc{PullOnly} subroutine} 
Here we present pseudocode for the \textsc{PullOnly} subroutine called in line 6 of \alg{} (Algorithm~\ref{alg:heuristicClean}).

\begin{algorithm}[H]
\caption{Cumulative subsidy of top-2 arms (pulls)\\ Note: all sorts are descending; arrays are zero-indexed.}
\label{alg:pullsOnly}
\footnotesize
\begin{algorithmic}[1] 
\Procedure{\textsc{PullOnly}}{$E^\prime,\texttt{b} \in \mathbb{N}, W_2$}
\State $V^\prime_{2} \gets \{u \ | \ (u,v) \in E^\prime \land v = -1\}$
\State $\texttt{b} \gets \min(|V^\prime_{2}|, \texttt{b})$
\State $\texttt{pull\_vals} \gets \textsc{sort}(g: u \in V^\prime_{2} \mapsto (u,W_2^u))$
\State $\hat{a}_2: u \in \pi_\ell(\texttt{pull\_vals[:b]}) \mapsto 2$ 
\State $\nu_2 \gets \sum_{i=0}^{\texttt{b}}\pi_r(\texttt{pull\_vals}_i)$\\
\algLC{1}{Return top-\texttt{b} arm-actions and their cumulative subsidy}
\State\Return $\hat{a}_2, \nu_2$
\EndProcedure
\end{algorithmic}
\end{algorithm}

\subsubsection{\textsc{ModActsB} subroutine}
\label{app:updateActionsB}
Here we present pseudocode for the \textsc{ModActsB} subroutine called in line 9 or 12 (depending on the \textsc{if/else} in line 8) of \alg{} (Algorithm~\ref{alg:heuristicClean}), as well as in line 18 of the \textsc{MP} subroutine (Algorithm~\ref{alg:msgPull}).
\begin{algorithm}[!h]
\caption{Update $\vec{a}_t$ and budget per max-value action(s)}
\label{alg:updateActionsBudget}
\footnotesize
\begin{algorithmic}[1] 
\Procedure{\textsc{ModActsB}}{$G^\prime, C, \hat{a}^*, \vec{a}_t, B^\prime$}
\algLC{1}{For every vertex $u$ with an updated action:}
\For {$u \in \dom(\hat{a}^*)$}
\If {$\hat{a}^*(u) = 1$} 
    \algLC{4}{New action is a message; update $\vec{a}_t, B^\prime$}
    \State $B^\prime \mathrel{{-}{=}} C(1)(1 - \mathds{1}(\hat{a}^*(u) =1 \lor u = -1))$
    \State $\vec{a}_t^u \gets 1$
\ElsIf {$\hat{a}^*(u) = 2$} 
    \algLC{4}{New action is a pull; update $\vec{a}_t, B^\prime$}
    \State $B^\prime \mathrel{{-}{=}}  C(2)(1-\mathds{1}(\hat{a}^*(u) > 0)) + $ \\ \hspace{2.3cm} $\mathds{1}(\hat{a}^*(u) = 1)(C(2) - C(1))$
    \State $\vec{a}_t^u \gets 2$
    \algLC{4}{If $C(1)=0$, message every $v \in \calNout^{\prime}(u)$}
    \If {$\psi = 0$}
        \For {$v \in \calNout^{\prime}(u)$}
            \State $\vec{a}_t^v \gets \max(\vec{a}_t^v, 1)$
        \EndFor
    \EndIf
\EndIf
\EndFor 
\State\Return $\vec{a}_t, B^\prime$
\EndProcedure
\end{algorithmic}
\end{algorithm}

\subsubsection{\textsc{UpdateG} subroutine}
\begin{algorithm}[H]
\caption{Update $E^\prime \text{ and } G^\prime$}
\label{alg:updateGprime}
\footnotesize
\begin{algorithmic}[1] 
\Procedure{\textsc{UpdateGraph}}{$V^\prime, E^\prime, \hat{a}^*, E^\prime_{\oslash}$}
\For {$u \in \dom(\hat{a}^*)$}
\If {$\hat{a}^*(u) = 2$}
\State $E^\prime_{\oslash} \gets E^\prime_{\oslash} \cup \{e_{s,t} | e_{s,t} \in E^\prime \land$ \\\hspace{3.9cm} $\left(t=u\lor (s=u\land t=-1)\right)\}$
\EndIf 
\EndFor
\State $E^\prime \gets E^\prime \setminus  E^\prime_{\oslash}$
\State\Return $E^\prime, G^\prime=(V^\prime, E^\prime)$
\EndProcedure
\end{algorithmic}
\end{algorithm}

\subsection{Proofs}
\label{app:proofs}

\subsubsection{Bounding expected reward}
Here, we begin with Remark~\ref{remark:expLocalRNonDecinA}, and proceed to prove Theorem~\ref{thm:expectedReward} (restated for convenience below).\\
\begin{remark}
\label{remark:expLocalRNonDecinA}
For any restless arm $i \in [n]$ whose transition matrix entries satisfy the structural constraints introduced in Section~\ref{sec:networkedRmab}, action space $\mathcal{A}:=\{0:\texttt{no-act},1:\texttt{message},2:\texttt{pull}\}$, and non-decreasing local reward function, $r:~\mathcal{S}~\rightarrow~\mathbb{R}$, we have: $$\forall(a, a^\prime) \in \mathcal{A}\times\mathcal{A}, a < a^\prime \rightarrow \mathbb{E}[r^i_t | s^i_t, a_t] < \mathbb{E}[r^i_t | s^i_t, a^\prime_t]$$
\end{remark}

\expectedreward*

\begin{proof}
The \emph{first inequality}, $\mathbb{E}_{\textsc{TW}}[R] \leq \mathbb{E}_{\textsc{GH}, \psi > 0}[R]$, follows directly from how \textsc{Greta} constructs each $\vec{a}_t$. \textsc{Greta}'s \textsc{PullOnly} subroutine (Algorithm~\ref{alg:pullsOnly}) computes candidate action vector $\hat{a}_2$, which represents the actions we would take when following graph-agnostic \textsc{TW} for each budget chunk, $\texttt{b} \gets \min(B^{\prime}, 2)$. 

Since we do not execute $\vec{a}_t$ until we are finished constructing it---i.e., until we have run out of budget or edges, per the \textsc{while}-loop termination logic in line 8 of Algorithm~\ref{alg:heuristicClean}, the arms' states don't change from while loop iteration $i$ to $i+1$. Thus, per the inequality on line 11, \textsc{Greta} will return the same mapping from arms to actions as that returned by graph-agnostic \textsc{TW} with $k=\lfloor{B\rfloor}$ if $\bigwedge_i \mathds{1}(\nu_2^{i, (\cdot)} \geq \nu_{(1,2)}^{i, (\cdot)})$, and a mapping containing pulls and at least one message otherwise---i.e., when it is strictly advantageous to do so.

The \emph{second inequality}, $\mathbb{E}_{\textsc{GH}, \psi > 0}[R] \leq \mathbb{E}_{\textsc{GH}, \psi = 0}[R]$, follows directly  from Remark~\ref{remark:expLocalRNonDecinA} and the fact that when $\psi=0$, we can message at least as many arms as we can when $\psi>0$. Let $\mathcal{M}^u_t$ represent the set of \emph{potential} messages that is ``unlocked'' when a given node $u$ receives a pull at time $t$. The cardinality of $\mathcal{M}^u_t$ can be bound as follows:
\begin{equation}
    |\mathcal{M}^u_t| = \begin{cases}
			|\calNout^{\prime}(u)|, & \text{if $\psi = 0$}\\
            \min(\lfloor{\nicefrac{\texttt{b}}{\psi}\rfloor}, |\calNout^{\prime}(u)|), & \psi > 0
		 \end{cases}
\end{equation}

This is because when $\psi=0$, for each node $u$ that we pull at time $t$, we message \emph{every} one-hop outdegree neighbor $v \in \calNout^{\prime}(u)$ that has not already received a pull or message at time $t$ (see Algorithm~\ref{alg:updateActionsBudget}, lines 10-12). In contrast, when $\psi >0$, we must take the budget constraint into account.

The original claim follows from the transitive property. 
\end{proof}

\subsubsection{Computational complexity}
Here, we prove that \alg{} is efficiently computable in time polynomial in its inputs. We begin by introducing several lemmas related to the runtime of \alg's subroutines, and then proceed to prove  Theorem~\ref{thm:timeComplexity} (restated for convenience below, immediately preceding its corresponding proof).\\

\begin{remark}
\label{remark:construct}
The \textsc{Construct} subroutine (Algorithm~\ref{alg:constructGprime}) used to construct the augmented graph, $G^\prime$ , has computational complexity $O(|V|)$, since we insert the dummy vertex, $-1$, and a directed edge $(u,-1)$ for each $u \in V$.
\end{remark}

\begin{lemma}
\label{lemma:thresholdWhittle}
Using graph-agnostic \textsc{Threshold Whittle} to pre-compute the Whittle index for each vertex-active action combination $(v,\alpha) \in V^\prime \times \mathcal{A} \setminus \{0\}$ has time complexity $O(|V||\mathcal{A}||\mathcal{S}|^2T)$. 
\end{lemma}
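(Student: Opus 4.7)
The plan is to reduce the claimed bound to a per-arm, per-active-action cost for \textsc{Threshold Whittle} and then multiply by the number of (vertex, active action) pairs that must be indexed. First, I would recall the per-arm complexity of \textsc{Threshold Whittle} from \citet{mate2020collapsing}: for a single two-action restless arm with binary state space $\mathcal{S}$ and horizon $T$, the closed-form threshold-based computation of the Whittle index runs in $O(|\mathcal{S}|^2 T)$ time, since it requires evaluating the value function in Equation~\ref{eqn:valueFunc} for at most $O(T)$ threshold candidates per state, and each evaluation involves $O(|\mathcal{S}|^2)$ work for the transition update. I would state this as a black-box fact coming from \citet{mate2020collapsing}.

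Next, I would observe that Algorithm~\ref{alg:whittle} is invoked over the set $V^\prime \times (\mathcal{A}\setminus\{0\})$. Because the dummy vertex $-1$ is mapped to $\lambda=0$ in constant time by the piecewise definition of $\lambda$, the nontrivial work is performed on $V^\prime \setminus \{-1\} = V$, giving $|V|$ vertices to index. For each such vertex, we treat each active action $\alpha \in \{1,2\}$ as an independent two-action sub-problem (\texttt{no-act} vs.\ $\alpha$), and invoke \textsc{Threshold Whittle} once per sub-problem; this is legitimate because the structural constraints in Section~\ref{sec:networkedRmab} ensure each sub-problem individually satisfies the monotonicity conditions under which the \textsc{TW} closed form applies.

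Putting the pieces together, the total cost is
\[
|V| \cdot (|\mathcal{A}| - 1) \cdot O(|\mathcal{S}|^2 T) \;=\; O(|V|\,|\mathcal{A}|\,|\mathcal{S}|^2\,T),
\]
absorbing the $-1$ into the big-$O$. I would close by noting that the $O(|V|)$ overhead of \textsc{Construct} (Remark~\ref{remark:construct}) and the $O(1)$ treatment of the dummy vertex are dominated by this term, so the stated bound holds.

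The main obstacle is being precise about the cited per-arm \textsc{Threshold Whittle} cost: one must verify that invoking \textsc{TW} with \texttt{message} as the active action (rather than \texttt{pull}) still inherits the same $O(|\mathcal{S}|^2 T)$ bound, which follows because constraint set (ii) of Section~\ref{sec:networkedRmab} enforces the same structural monotonicity on the \texttt{message} transitions as on the \texttt{pull} transitions, so the \textsc{TW} analysis in \citet{mate2020collapsing} applies unchanged to each of the $|\mathcal{A}|-1$ active actions.
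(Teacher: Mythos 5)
Your proposal is correct and follows the same route as the paper's proof: cite the $O(|\mathcal{S}|^2T)$ per-arm, per-active-action cost of \textsc{Threshold Whittle} from \citet{mate2020collapsing}, then multiply by the $|V|$ arms and $|\mathcal{A}\setminus\{0\}|$ active actions. The extra care you take with the dummy vertex and the applicability of the \textsc{TW} closed form to the \texttt{message} action is a reasonable elaboration but does not change the argument.
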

\begin{proof}
Per \citet{mate2020collapsing}, \textsc{Threshold Whittle} has time complexity $O(|S|^2T)$ per arm-active action. There are $N = |V|$ arms and $|\mathcal{A} \setminus \{0\}|$ active actions. Thus, Algorithm~\ref{alg:whittle} has time complexity $O(|V||\mathcal{A}||\mathcal{S}|^2T)$.
\end{proof}

\begin{lemma}
\label{lemma:pullOnly}
The \textsc{PullOnly} subroutine (Algorithm~\ref{alg:pullsOnly}) has time complexity $O(|V^\prime| \log |V^\prime|)$.
\end{lemma}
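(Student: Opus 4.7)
The plan is to walk through Algorithm~\ref{alg:pullsOnly} line by line and bound the cost of each step, then observe that the sort on line~3 is the dominant operation.

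First I would show that constructing $V^\prime_{2}$ on line~1 costs $O(|V^\prime|)$. By the augmented-graph construction (see Remark~\ref{remark:construct} and the definition $E^\prime = E \cup \{(u,-1) \mid u \in V\}$), each original vertex $u \in V$ contributes at most one edge of the form $(u,-1)$ to $E^\prime$, so $|V^\prime_{2}| \le |V| < |V^\prime|$. Hence $V^\prime_{2}$ can be produced either by iterating over $V^\prime$ and doing a constant-time check for the edge $(u,-1)$, or by maintaining the set incrementally as pulls are assigned in the outer routine; either way the cost is $O(|V^\prime|)$. Line~2 is a single comparison, so $O(1)$.

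The bottleneck is line~3: sorting the pairs $\{(u, W_2^u) : u \in V^\prime_{2}\}$ in descending order by the second coordinate. Using any comparison-based sort, this costs $O(|V^\prime_{2}| \log |V^\prime_{2}|) = O(|V^\prime| \log |V^\prime|)$ since $|V^\prime_{2}| \le |V^\prime|$. The Whittle indices $W_2^u$ themselves have already been precomputed outside this subroutine (cf.\ Lemma~\ref{lemma:thresholdWhittle}), so they are accessed in $O(1)$ time each.

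Finally, lines 4 and 5 assign the action $2$ to the top $\texttt{b}$ vertices in the sorted list and sum their subsidy values, which takes $O(\texttt{b})$ time; since $\texttt{b} \le |V^\prime_{2}| \le |V^\prime|$, this is $O(|V^\prime|)$. Adding the per-line bounds gives an overall cost of $O(|V^\prime|) + O(|V^\prime| \log |V^\prime|) + O(|V^\prime|) = O(|V^\prime| \log |V^\prime|)$, as claimed. The main subtlety, and the only point that needs to be argued carefully rather than read off the pseudocode, is line~1: one must avoid implementing it by a naive scan of $E^\prime$ (which would give an $O(|E^\prime|)$ term and inflate the bound for dense $G^\prime$), and instead use the fact that the edges into $-1$ are in bijection with a subset of $V$.
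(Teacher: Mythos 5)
Your proof is correct and follows essentially the same line-by-line accounting as the paper's: the comparison sort of the Whittle index values dominates, giving $O(|V^\prime|\log|V^\prime|)$. Your one addition---justifying that the construction of $V^\prime_{2}$ can be done in $O(|V^\prime|)$ rather than $O(|E^\prime|)$ because the edges into the dummy vertex $-1$ are in bijection with a subset of $V$---is a detail the paper simply asserts, and it is a worthwhile clarification since the stated bound would not survive a naive scan of $E^\prime$ on dense graphs.
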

\begin{proof}
Line 2 of Algorithm~\ref{alg:pullsOnly} has cost $O(|V^\prime|)$. Lines 3, 5, and 6 have cost $O(1)$. The cost of this subroutine is thus dominated by the cost of sorting the Whittle index values (line 4), which is $O(|V^\prime| \log |V^\prime|)$.
\end{proof}

\begin{lemma} 
\label{lemma:edgeIndices}
The \textsc{EdgeIndices} subroutine (Algorithm~\ref{alg:computeIndices}) has time complexity $O(|V^\prime| \log |V^\prime|)$.
\end{lemma}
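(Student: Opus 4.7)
The plan is to walk through Algorithm~\ref{alg:computeIndices} line-by-line, bound each step in terms of $|\calNout^{\prime}(u)|$, and then lift the result to $|V^{\prime}|$ using the fact that $\calNout^{\prime}(u) \subseteq V^{\prime}$. Since the subroutine operates entirely on $u$'s one-hop outdegree neighborhood plus a few scalar lookups, the analysis is straightforward; the main job is to verify that the sum appearing in the max-edge case does not secretly inflate the cost to quadratic.

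I would begin by observing that line~2, which computes \texttt{n\_msgs}, is $O(1)$ given $\texttt{b}$, $\psi$, and $|\calNout^{\prime}(u)|$. Line~3 sorts the $W_1$ index values of the vertices in $\calNout^{\prime}(u)$ in descending order; a standard comparison-based sort runs in $O(|\calNout^{\prime}(u)| \log |\calNout^{\prime}(u)|)$ time. Line~4 retrieves \texttt{max\_edge} from the head of the already-sorted list in $O(1)$ time.

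Next I would analyze the \textsc{for}-loop on lines 5--9. The loop executes $|\calNout^{\prime}(u)|$ iterations. On the single iteration where $(u,v) = \texttt{max\_edge}$, the assignment requires summing \texttt{n\_msgs} prefix entries of \texttt{msg\_values}; this costs $O(\texttt{n\_msgs}) = O(|\calNout^{\prime}(u)|)$ since $\texttt{n\_msgs} \leq |\calNout^{\prime}(u)|$. On every other iteration, only a constant-time assignment is performed. Summed across all iterations, the loop contributes $O(|\calNout^{\prime}(u)|)$ time.

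Aggregating these contributions yields a total cost of $O(|\calNout^{\prime}(u)| \log |\calNout^{\prime}(u)|)$, dominated by the sort on line~3. The one place where it would be easy to slip is the max-edge branch: if the sum were recomputed inside every loop iteration rather than exactly once, the bound would degrade to $\Theta(|\calNout^{\prime}(u)|^2)$. Finally, because $\calNout^{\prime}(u) \subseteq V^{\prime}$ implies $|\calNout^{\prime}(u)| \leq |V^{\prime}|$, we obtain the stated bound $O(|V^{\prime}| \log |V^{\prime}|)$.
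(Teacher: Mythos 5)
Your proof is correct and follows essentially the same line-by-line accounting as the paper's: constant-time setup, an $O(|\calNout^{\prime}(u)|\log|\calNout^{\prime}(u)|)$ sort on line~3 that dominates, and a linear-time loop, all lifted to $|V^\prime|$ via $|\calNout^{\prime}(u)| \leq |V^\prime|$. Your explicit check that the prefix sum in the max-edge branch is evaluated only once is a slightly more careful treatment of the loop than the paper gives, but it leads to the same bound.
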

\begin{proof}
Lines 2 and 4 of Algorithm~\ref{alg:computeIndices} have time complexity $O(1)$. The \textsc{for}-loop on lines 5-9 has time complexity $O(|V^\prime|)$, since in the worst case, $|\calNout^{\prime}(u)| = |V^\prime|$. Thus, the time complexity of this subroutine is dominated by the $O(|V^\prime| \log |V^\prime|)$ cost of sorting the Whittle index values (line 3).
\end{proof}

\begin{lemma}
\label{lemma:modActsB}
For $\psi \in [0,1)$, the \textsc{ModActsB} subroutine (Algorithm~\ref{alg:updateActionsBudget}) has time complexity $O(|V^\prime|)$ if $\psi > 0$ and $O(|V^\prime|^2)$ otherwise.  
\end{lemma}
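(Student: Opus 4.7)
The plan is to bound the running time of Algorithm~\ref{alg:updateActionsBudget} by decomposing it into the outer \textsc{for}-loop over $\dom(\hat{a}^*)$ and the per-iteration work, then splitting on whether $\psi > 0$ or $\psi = 0$. Since $\dom(\hat{a}^*) \subseteq V^\prime$, the outer loop executes at most $|V^\prime|$ times in either case, so the only question is how much work each iteration does.

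First I would handle the body of each iteration independent of $\psi$. The conditional on line 3 (respectively line 6) performs an equality check and, if triggered, a budget decrement and a single action assignment. The budget arithmetic on lines 4 and 7 involves constant-time evaluations of the cost function $C$ together with indicator terms, and the writes $\vec{a}_t^u \gets 1$ and $\vec{a}_t^u \gets 2$ are constant-time array updates. Hence the $\psi$-independent portion of each outer iteration contributes $O(1)$.

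Next I would isolate the only $\psi$-dependent work, which lives in the nested branch on lines 9--11. When $\psi > 0$ this block is skipped entirely, so each outer iteration remains $O(1)$ and the total cost is $O(|V^\prime|)$. When $\psi = 0$ the block iterates over $\calNout(u)$ and performs a constant-time $\max$-update for each outdegree neighbor; since $|\calNout(u)| \leq |V^\prime| - 1$ in the worst case, this contributes $O(|V^\prime|)$ per outer iteration, for a total of $O(|V^\prime|) \cdot O(|V^\prime|) = O(|V^\prime|^2)$.

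I do not expect any substantive obstacle here; the argument is essentially a line-by-line accounting. The only care required is in verifying that the budget arithmetic and indicator evaluations are truly constant-time and independent of $|V^\prime|$, which is immediate from the static cost vector $\vec{c} = [0,\psi,1]$ and the fact that membership checks such as $v = -1$ are $O(1)$.
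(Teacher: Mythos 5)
Your proof is correct and follows essentially the same line-by-line accounting as the paper: the outer loop over $\dom(\hat{a}^*)$ contributes $O(|V^\prime|)$ iterations, the budget and action updates are $O(1)$, and the only $\psi$-dependent cost is the iteration over $\calNout^{\prime}(u)$ when $\psi = 0$, giving $O(|V^\prime|)$ versus $O(|V^\prime|^2)$. No issues.
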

\begin{proof}
In the \textsc{for}-loop contained in lines 2-12 of Algorithm~\ref{alg:updateActionsBudget}, we iterate over each vertex $u$ with an updated action (i.e., $\forall u \in \dom{\hat{a}^*}$), with cost $O(|V^\prime|)$. In lines 3-9, we decrement our remaining budget, $B^\prime$, and update our candidate action vector; each of these operations have time complexity $O(1)$. If $\psi >0$, lines 10-12 are not executed. Conversely, if $\psi = 0$, lines 10-12 \emph{are} executed, and we iterate over $u$'s one-hop, outdegree neighbors with cost $O(|V^\prime|)$. Thus, the time complexity of this subroutine is $O(|V^\prime|)$ if $\psi > 0$ and $O(|V^\prime|^2)$ otherwise. 
\end{proof}

\begin{lemma}
\label{lemma:updateGraph}
The \textsc{UpdateGraph} subroutine  (Algorithm~\ref{alg:updateGprime}) has time complexity $O(|V^\prime||E^\prime|)$.
\end{lemma}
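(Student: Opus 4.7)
The plan is to trace Algorithm~\ref{alg:updateGprime} line by line, bound the cost of each operation, and multiply by the number of times it is executed. First, I would observe that the outer \textsc{for}-loop on line 2 iterates over $\dom(\hat{a}^*)$, which contains at most $|V^\prime|$ vertices, so its body is executed $O(|V^\prime|)$ times. The conditional test on line 3 is $O(1)$.

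Inside the conditional, the dominant cost is the set-builder expression on lines 4--5, which identifies every edge $e_{s,t} \in E^\prime$ satisfying $t = u$ or $(s = u \land t = -1)$. Implementing this straightforwardly requires a single pass over $E^\prime$ (equivalently, traversing the in-adjacency list of $u$ together with the designated edge $(u,-1)$), at cost $O(|E^\prime|)$ per iteration; the union into $E^\prime_{\oslash}$ adds only $O(|E^\prime|)$ since $|E^\prime_{\oslash}| \le |E^\prime|$ is maintained throughout. Hence each iteration of the outer loop costs $O(|E^\prime|)$, contributing $O(|V^\prime| \cdot |E^\prime|)$ in total.

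After the loop, the set difference on line 7 removes $E^\prime_{\oslash}$ from $E^\prime$ at cost $O(|E^\prime|)$, and returning $G^\prime$ is $O(1)$ beyond that. Summing yields $O(|V^\prime||E^\prime|) + O(|E^\prime|) = O(|V^\prime||E^\prime|)$, matching the claim. There is no substantive obstacle here; the only subtlety is noting that the set-builder filter on lines 4--5 should be realized in a single scan of the edge list per iteration rather than, say, a membership test for every pair $(s,t) \in V^\prime \times V^\prime$, which would yield a weaker bound.
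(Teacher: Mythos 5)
Your proposal is correct and follows essentially the same argument as the paper's proof: bound the outer loop by $O(|V^\prime|)$ iterations over $\dom(\hat{a}^*)$, charge $O(|E^\prime|)$ for the edge-filtering on lines 4--5 and for the final set difference, and multiply to get $O(|V^\prime||E^\prime|)$. Your added remark about realizing the set-builder as a single scan of the edge list is a reasonable implementation note but does not change the substance.
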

\begin{proof}
In the \textsc{for}-loop contained in lines 2-5 of Algorithm~\ref{alg:updateGprime}, we iterate over each vertex $u$ with an updated action (i.e., $\forall u \in \dom{\hat{a}^*}$). Lines 4-5 and 6 each have cost $O(|E^\prime|)$. Thus, the time complexity of this subroutine is $O(|V^\prime||E^\prime|)$. 
\end{proof}

\begin{lemma}
\label{lemma:msgPullWhileLoop}
Let $\mathcal{W}_{MP}$ represent the number of \textsc{while}-loop iterations that occur during any given call to the \textsc{MsgPull} subroutine (i.e., Algorithm~\ref{alg:msgPull}, lines 7-22). For $\psi \in [0,1)$, we can upper-bound $\mathcal{W}_{MP}$ as follows:
\begin{equation*}
    \begin{cases} 
		O(\min(|E^{\prime}|, \lfloor{\frac{\texttt{b=2}}{\psi}\rfloor})), & \text{if $\psi > 0$}\\
        O(|V^{\prime}|), & \text{otherwise}
		 \end{cases}
\end{equation*}
\end{lemma}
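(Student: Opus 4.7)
The plan is to bound $\mathcal{W}_{MP}$ by two independent ``progress'' arguments in each iteration of the \textsc{while}-loop on lines 7--22 of Algorithm~\ref{alg:msgPull}: (i) how much local budget $\texttt{b}$ is consumed, and (ii) how many vertices or edges become unavailable as future pull/message candidates. Recall that the loop terminates as soon as either $E^{\dprime}=\emptyset$ or no remaining edge has a \textsc{GetCost} value $\leq \texttt{b}$, and that it is invoked with $\texttt{b}=\min(B^\prime,2)\leq 2$.

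\textbf{Case $\psi>0$.} First I would argue the budget-based bound. By \textsc{GetCost} (Algorithm~\ref{alg:getCost}), any edge $(u,v)$ with $v\neq -1$ incurs cost at least $C(1)=\psi$ per iteration (regardless of whether $u$ was previously pulled), and the placeholder edges $(u,-1)$ incur cost at least $C(2)-C(1)=1-\psi$ or $C(2)=1$ (both $\geq\psi$ since $\psi<1$). Hence each iteration reduces $\texttt{b}$ by at least $\psi$, so after at most $\lfloor 2/\psi\rfloor$ iterations the budget is exhausted. Next I would argue the edge-based bound: the call to \textsc{UpdateG} on line 19 removes at least the selected edge $(u,v)$ from $E^{\dprime}$, so after at most $|E^\prime|$ iterations we have $E^{\dprime}=\emptyset$. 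Taking the minimum yields $\mathcal{W}_{MP}=O\bigl(\min(|E^\prime|,\lfloor 2/\psi\rfloor)\bigr)$.

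\textbf{Case $\psi=0$.} Budget is no longer a binding constraint, so I would instead show that each vertex $u\in V^\prime\setminus\{-1\}$ serves as the pull-source of the selected edge in at most one iteration. Suppose $(u,v)$ is chosen in some iteration. By line 18 of Algorithm~\ref{alg:msgPull} the map $h$ assigns $u\mapsto 2$, and in the resulting call to \textsc{ModActsB} (Algorithm~\ref{alg:updateActionsBudget}, lines 10--12) the $\psi=0$ branch fires, setting $\hat{a}_{(1,2)}^{v^\prime}\gets\max(\hat{a}_{(1,2)}^{v^\prime},1)$ for every $v^\prime\in\calNout^{\prime}(u)$. In every subsequent iteration, the recomputation of $\calNout^\prime(u)=\{v^\prime:(u,v^\prime)\in E^{\dprime}\land \hat{a}_{(1,2)}^{v^\prime}=0\}$ on line 9 therefore yields the empty set, so \textsc{EdgeIndices} contributes no $f((u,\cdot))$ value to the sorted list on line 11. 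Since there are only $|V^\prime|-1$ non-dummy candidate pull-sources, we obtain $\mathcal{W}_{MP}=O(|V^\prime|)$, which dominates the $|E^\prime|$-based bound whenever the graph is dense.

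The main obstacle is the $\psi=0$ argument: it requires tracking the interaction between \textsc{ModActsB}'s ``cascade'' branch and the definition of $\calNout^\prime(u)$ used by \textsc{EdgeIndices} in the next iteration, rather than relying on the more syntactic edge-removal performed by \textsc{UpdateG}. Once that invariant (``a pulled vertex cannot re-serve as pull-source'') is established, combining it with the trivial $|E^\prime|$ bound and the $\psi>0$ budget argument finishes the claim.
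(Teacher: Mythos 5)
Your proposal matches the paper's own proof essentially step for step: for $\psi>0$ you combine the same two progress measures (budget decreasing by at least $\psi$ per upgrade, giving $\lfloor 2/\psi\rfloor$, and at least one edge removed per iteration, giving $|E^\prime|$), and for $\psi=0$ you use the same invariant that once $u$ is pulled the cascade in \textsc{ModActsB} empties $\calNout^\prime(u)$ so $u$ never re-serves as a pull source, yielding $O(|V^\prime|)$. The only quibble is your parenthetical claim that $1-\psi\geq\psi$ ``since $\psi<1$'' (this holds only for $\psi\leq 1/2$), but this does not affect the stated bound and the paper's own argument glosses over the same message-to-pull upgrade cost.
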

\begin{proof}
There are two cases: $\psi >0$ and $\psi =0$. We consider each below:
\begin{case} $\psi > 0$: For any given arm $i \in [n]$, and \textsc{while}-loop iteration $j > 0$, let an \emph{action upgrade} represent a modification to $\hat{a}_{(1,2)}$ such that $\hat{a}_{(1,2)}^{(i,j)} > \hat{a}_{(1,2)}^{(i,j-1)}$. The small-\texttt{b} budget is initialized $=2$, and is strictly decreasing in the number of action upgrades when $\psi >0$. The worst case from a time complexity perspective will be when our only upgrades are messages, since $C(\texttt{pull}) > \psi \text{ for } \psi \in [0,1)$---i.e., $\lfloor{\frac{\texttt{b=2}}{\psi}\rfloor}$ iterations.
    
Additionally, at each iteration we either: (a) have at least one cost-feasible action upgrade, which results in the removal of at least one edge from $E^{\dprime}$ when we call \textsc{UpdateG} in line 19 (note: $E^{\dprime}$ is initialized in line 2 as a copy of $E^\prime$, so it is always the case that $|E^{\dprime}| \leq |E^\prime|$); or (b) do not have any cost-feasible action upgrades remaining, in which case, the \textsc{while}-loop terminates when we call \textsc{GetCost} in line 7. Thus, we can conduct at most $\min(|E^{\prime}|, \lfloor{\frac{\texttt{b=2}}{\psi}\rfloor})$ iterations before breaking. 
\end{case}

\begin{case} $\psi = 0$: Each time we select to pull vertex $u$ in lines 14-22, we message \emph{every} vertex $v \in \calNout^{\prime}(u)$ that is currently slated to receive a \texttt{no-act} (i.e., when we call the \textsc{ModActsB} subroutine on line 18). The next time we encounter $u$ in the \textsc{for}-loop on line 8, the construction of $\calNout^{\prime}(u)$ on line 9 will yield $\emptyset$, since we include only neighbors for which $\hat{a}_{(1,2)} = 0$. This will hold for each vertex $\in V^{\dprime}$, so we will break on line~13 after at most $O(|V^{\prime}|)$ iterations of the \textsc{while}-loop.
\end{case}
\end{proof}

\begin{lemma}
\label{lemma:msgPull}For notational convenience, let $\xi = \mathds{1}(\psi>0) \times \min(|E^{\prime}|^2, \lfloor{\frac{\texttt{b=2}}{\psi}\rfloor}|E^\prime|) + \mathds{1}(\psi=0) \times|V^{\prime}||E^{\prime}|$.
For $\psi \in [0,1)$, the \textsc{MsgPull} subroutine (Algorithm~\ref{alg:msgPull}) has time complexity $O\left(\max\left(\xi|V^\prime|^2\log|V^\prime|, \  \xi|V^\prime||E^\prime|^2 \right) \right)$ if $\psi > 0$, and $O\left(\max\left(\xi|V^\prime|^2\log|V^\prime|, \ \xi|V^\prime||E^\prime|^2, \ \xi|E^\prime||V^\prime|^2  \right) \right)$ otherwise.
\end{lemma}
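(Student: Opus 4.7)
The plan is to treat \textsc{MsgPull} as a one-time initialization (lines 2--6) wrapped around an outer \textsc{while}-loop (lines 7--22), bound the cost of a single \textsc{while}-loop iteration using the lemmas already proved for its subroutines, and then scale by the iteration count $\mathcal{W}_{MP}$ supplied by Lemma~\ref{lemma:msgPullWhileLoop}. The initialization costs $O(|V^\prime|+|E^\prime|)$ and is dominated by the per-iteration work derived below, so it contributes no extra term.

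For one iteration of the \textsc{while}-loop, the index-recomputation block on lines 8--10 makes $|V^\prime|-1$ calls to \textsc{EdgeIndices}, each costing $O(|V^\prime|\log|V^\prime|)$ by Lemma~\ref{lemma:edgeIndices}, for a total of $O(|V^\prime|^2\log|V^\prime|)$. The sort on line~11 is $O(|E^\prime|\log|E^\prime|)$. The inner \textsc{for}-loop on lines 14--22 walks the sorted edge list in descending order but executes \textbf{break} the moment it commits to a cost-feasible edge; thus its work amounts to at most $|E^\prime|$ constant-time \textsc{GetCost} calls, together with at most one call apiece to \textsc{ModActsB} and \textsc{UpdateGraph}. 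By Lemmas~\ref{lemma:modActsB} and~\ref{lemma:updateGraph} these contribute $O(|V^\prime|)$ (respectively $O(|V^\prime|^2)$ when $\psi=0$) and $O(|V^\prime||E^\prime|)$. Collecting summands, one iteration costs $O(|V^\prime|^2\log|V^\prime| + |V^\prime||E^\prime|)$ when $\psi>0$, with an additional $O(|V^\prime|^2)$ summand when $\psi=0$.

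To close the argument I multiply the per-iteration cost by Lemma~\ref{lemma:msgPullWhileLoop}'s bound on $\mathcal{W}_{MP}$. The key observation is that the two branches of the definition of $\xi$ were engineered precisely so that $\xi = \mathcal{W}_{MP}\cdot|E^\prime|$ in both regimes. Using $|E^\prime|\ge 1$, each per-iteration summand multiplied by $\mathcal{W}_{MP}$ can be loosened into one of the terms appearing in the claim: $\mathcal{W}_{MP}|V^\prime|^2\log|V^\prime|\le\xi|V^\prime|^2\log|V^\prime|$, $\mathcal{W}_{MP}|V^\prime||E^\prime|=\xi|V^\prime|\le\xi|V^\prime||E^\prime|^2$, and (for $\psi=0$ only) $\mathcal{W}_{MP}|V^\prime|^2\le\xi|E^\prime||V^\prime|^2$, yielding the two-way and three-way $\max$ bounds respectively.

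The main obstacle is bookkeeping rather than any new insight: I need to verify carefully that the \textbf{break} in the inner \textsc{for}-loop truly limits the expensive \textsc{ModActsB} and \textsc{UpdateGraph} invocations to one per \textsc{while}-iteration (so their cost does not pick up an extra $|E^\prime|$ factor), and to align each case of the two-case definition of $\xi$ with the corresponding case of $\mathcal{W}_{MP}$ without losing or gaining an $|E^\prime|$ factor when absorbing the per-iteration cost into the stated bound.
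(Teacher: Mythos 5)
Your proof is correct and follows the same overall strategy as the paper's: bound the one-time setup on lines 2--6, bound a single iteration of the outer \textsc{while}-loop using Lemmas~\ref{lemma:edgeIndices}, \ref{lemma:modActsB}, and~\ref{lemma:updateGraph}, and multiply by the iteration count from Lemma~\ref{lemma:msgPullWhileLoop}. The one substantive place you diverge is in charging lines 14--22: you observe that the \textbf{break} limits each \textsc{while}-iteration to at most one call apiece to \textsc{ModActsB} and \textsc{UpdateGraph}, so that block costs only $O(|V^\prime||E^\prime|)$ per iteration (plus $O(|V^\prime|^2)$ when $\psi=0$), whereas the paper conservatively charges every one of the $O(|E^\prime|)$ inner iterations with the full $O(|V^\prime||E^\prime|)$ \textsc{UpdateGraph} cost, which is precisely where its $|V^\prime||E^\prime|^2$ per-iteration term originates. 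Your accounting is therefore tighter by a factor of $|E^\prime|$, and you then deliberately loosen---via the identity $\xi=\mathcal{W}_{MP}\cdot|E^\prime|$, which holds in both regimes, and $|E^\prime|\geq 1$---to recover exactly the stated bound, so the lemma as claimed still follows; the paper instead folds the $O(|E^\prime|)$ termination-check cost of line 7 into $\xi$ and multiplies that by the per-iteration body cost. The only nit is that your tally for lines 8--10 should explicitly account for the $O(|V^\prime||E^\prime|)$ cost of constructing the sets $\calNout^\prime(u)$ on line 9 (the paper does), but this is absorbed by the $O(|V^\prime||E^\prime|)$ summand you already carry, so nothing breaks.
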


\begin{proof}
Regardless of the value of $\psi$, lines 2-6 of Algorithm~\ref{alg:msgPull} are dominated by the cost of constructing $G^{\dprime}$ (line 2), which has cost $O(|V^\prime| + |E^\prime|)$. The \textsc{while}-loop termination check we perform in line 7 has time complexity $O(|E^{\prime}|)$ since we call \textsc{GetCost} (Algorithm~\ref{alg:getCost}) for each edge $e \in E^{\dprime}$. 

In the \textsc{for}-loop contained in lines 8-10, we loop over each vertex $v \in V^{\dprime}$ (line 8; $O\left(|V^\prime|\right)$), collect the one-hop, outdegree neighbors currently slated to receive a \texttt{no-act} (line 9; $O\left(|E^\prime|\right)$), and then call \textsc{EdgeIndices}. Thus, per Lemma~\ref{lemma:edgeIndices}, the time complexity of lines 8-10 is $O\left(\max\left(|V^\prime|^2 \log |V^\prime|, \ |V^\prime||E^\prime|\right)\right)$. 

The cost to sort the edge index values in line 11 is $O(|E^\prime| \log |E^\prime|)$; the termination check on line 12 has cost $O(1)$. 

In the \textsc{for}-loop contained in lines 14-22, we iterate over edge index values (line 14; $O(|E^\prime|)$). Lines 15-16 and 20-21 each have cost $O(1)$. Per Lemma~\ref{lemma:modActsB}, the call to \textsc{ModActsB} (Algorithm~\ref{alg:updateActionsBudget}) in line 18 has cost $O(|V^\prime|)$ if $\psi > 0$ and $O(|V^\prime|^2)$ otherwise. Per Lemma~\ref{lemma:updateGraph}, the call to \textsc{UpdateG} (Algorithm~\ref{alg:updateGprime}) in line 19 has time complexity $O(|V^\prime||E^\prime|)$. Thus, for $\psi \in [0,1)$, lines 14-22 have time complexity $O(|V^\prime||E^\prime|^2)$ if $\psi > 0$ and $O\left(\max\left(|V^\prime||E^\prime|^2, \ |V^\prime|^2|E^\prime|\right)\right)$ otherwise.

Finally, we consider the subroutine in its entirety. For notational convenience, let $\xi$ represent the outer cost of the \textsc{while}-loop, where the maximum number of iterations is defined per Lemma~\ref{lemma:msgPullWhileLoop}: \begin{small}$$
       \xi = \mathds{1}(\psi > 0) \times \min(|E^{\prime}|^2, \lfloor{\frac{\texttt{b=2}}{\psi}\rfloor}|E^\prime|) + 
       \mathds{1}(\psi = 0) \times~|V^{\prime}||E^\prime|$$ \end{small}
Then, for $\psi \in [0,1)$, the time complexity of the \textsc{MsgPull} is:
\begin{small}
$
\begin{cases}
		O\left(\max\left(\xi|V^\prime|^2\log|V^\prime|, \  \xi|V^\prime||E^\prime|^2 \right) \right), & \text{$\psi > 0$}\\
            O\left(\max\left(\xi|V^\prime|^2\log|V^\prime|, \ \xi|V^\prime||E^\prime|^2, \ \xi|V^\prime|^2|E^\prime|  \right) \right), & \text{else}
		 \end{cases}
$\end{small}
\end{proof}

\timecomplexity*

\begin{proof}
Per Remark~\ref{remark:construct} and Lemma~\ref{lemma:thresholdWhittle}, the time complexity of constructing \alg's inputs is dominated by precomputation of the Whittle indices, which has cost $O(|V||\mathcal{A}||\mathcal{S}|^2T)$. Lines 2-3 of \alg{} (Algorithm~\ref{alg:heuristicClean}) have cost $O(1)$.

The \textsc{while}-loop termination check we perform in line 4 has time complexity $O(|E^{\prime}|)$ since we call \textsc{GetCost} (Algorithm~\ref{alg:getCost}) for each edge $e \in E^{\prime}$. Line 5 has cost $O(1)$. 

Inside the \textsc{while}-loop (i.e., Algorithm~\ref{alg:heuristicClean}, lines 4-13), the call to the \textsc{MsgPull} subroutine on line 7 contributes the dominating cost. Per Lemma~\ref{lemma:msgPull}, for $\psi \in [0,1)$, this subroutine has time complexity:

\begin{small}
\begin{equation*}
\begin{cases}
		O\left(\max\left(\xi|V^\prime|^2\log|V^\prime|, \  \xi|V^\prime||E^\prime|^2 \right) \right), & \text{$\psi > 0$}\\
            O\left(\max\left(\xi|V^\prime|^2\log|V^\prime|, \ \xi|V^\prime||E^\prime|^2, \ \xi|V^\prime|^2|E^\prime|  \right) \right), & \text{else}
		 \end{cases}
\end{equation*}
\end{small}
where \begin{small}$$\xi = \mathds{1}(\psi > 0) \times \min(|E^{\prime}|^2, \lfloor{\frac{\texttt{b=2}}{\psi}\rfloor}|E^\prime|) + \mathds{1}(\psi = 0) \times~|V^{\prime}||E^\prime|$$\end{small}

Next, we can bound the number of  \textsc{while}-loop iterations that occur during any given call to \alg{} in a way that proceeds identically to the bound we establish for the \textsc{MsgPull} subroutine's \textsc{while}-loop in Lemma~\ref{lemma:msgPullWhileLoop}, with one small modification: we replace $\texttt{b}$ with the full budget, $B$, noting that $\texttt{b} \leq B$. This accounts for the presence of the $\xi^2$ term in Theorem~\ref{thm:timeComplexity}. 

Finally, we note that we call \alg{} once per timestep $t$. Thus, when we consider the time complexity over time horizon, $T$, the cost of computing $\vec{a}_t$ $T$ times dominates the $O(|V||\mathcal{A}||\mathcal{S}|^2T)$ cost of precomputing the Whittle indices, and we are done.
\end{proof}

\section{Additional empirical results}

Here we provide additional empirical support for Theorem~\ref{thm:expectedReward}, reproduced below for convenience:
\expectedreward*

We note that for any given directed graph, $G=(V,E)$, for fixed $|V|$, as $|E|$ tends toward a complete graph (i.e., $|E|~\rightarrow~|V|*(|V|-1)$), as long as we have budget $B \geq C(2) + C(1)$ (or, equivalently, $B \geq 1 + \psi$), we can pull \emph{any} arm and gain the ability to message \emph{any other} arm. Thus, in the complete graph setting, we can modify the constrained optimization-based approach represented by Equation~\ref{eqn:objective} (reproduced for convenience below) to: (1) include the requirement that at least one arm receive a pull (i.e., constraint c); and (2) remove the neighborhood constraint (i.e., constraint a), since it is guaranteed to be satisfied without being explicitly enforced.

{\small
{\centering
\begin{equation*}
\label{eqn:objectiveRestate}
\begin{array}{l@{}ll}
J(\mathbf{s}) = \max\limits_{\mathbf{X}} & \left\{ \displaystyle\sum_{i=0}^{n-1} r^i(s^i) + \beta \mathbb{E}[J(\mathbf{s}^\prime), \mathbf{X}]\right\}  & \\

\text{subject to } & \displaystyle\sum_{i=0}^{n-1} \sum_{j=0}^{|\mathcal{A}|-1} x_{i,j} \cdot c_j \leq B & \\

(a) & x_{i,1} \leq \displaystyle\bigvee_{i^\prime \in \calNin(i)}
x_{i^\prime, 2} & \forall i \in [n]\\ 

(b) & \displaystyle\sum_{j = 0}^{|\mathcal{A}|-1} x_{i,j} = 1 & \forall i \in [n]\\

(c) & \displaystyle\sum_{i=0}^{n-1} x_{i,2} \geq 1 & \\
(d) & \mathbf{X} \in \{0,1\}^{n \times |\mathcal{A}|} & 

\end{array}
\end{equation*}}}\\

Thus, for fixed set of restless arms with cardinality $|V|$, and message cost, $\psi$, we can upper-bound $\mathbb{E}_{\textsc{GH}, \psi}[R]$ by the expected reward achieved by the modified math program when the graph in question is complete. Intuitively, this bound will become tighter as the cardinality of $|E|$ is increased.

To empirically validate this claim, we consider a synthetic cohort of $n=100$ restless arms whose transition matrices are randomly generated in such a way so as to satisfy the structural constraints introduced in Section~\ref{sec:model}. We let $T = 120, B=10$, and $\psi = 0.5$. We can then construct graphs using this fixed vertex set but containing edge sets with varying cardinalities, expressed as a percentage of the number of edges the complete graph would contain. 

More concretely, we define a set of six edge generation seeds, and use each seed to select subsets of edges uniformly at random, such that the subsets have cardinalities $\in \{0.0, 0.1, 0.2, 0.3, 0.4, 0.5, 0.6, 0.7, 0.8, 0.9, 1.0\}$ of the complete graph. For each graph so constructed, we report unnormalized $\mathbb{E}_\pi[R]$, along with margins of error for 95\% confidence intervals computed over 30 simulation iterations (see subfigures a-f). We observe that:
\begin{enumerate}
    \item For each seed, and every value of $|E|$, $\mathbb{E}_{\textsc{TW}, \psi}[R] \leq \mathbb{E}_{\textsc{GH}, \psi}[R] < \mathbb{E}_{\textsc{MP},\psi}[R]$.
    \item We note that while \textsc{Greta}'s expected reward does not monotonoically increase with $|E|$ in every case, this is to be expected, since we are not guaranteed to get the same subset of edges we had for smaller percentage values as we increase $|E|$. This being said, expected reward is generally increasing with $|E|$, and as $|E|~\rightarrow |V|(|V|-1)$, $\mathbb{E}_{\textsc{GH}, \psi}[R] \rightarrow \mathbb{E}_{\textsc{MP},\psi}[R]$.
    \end{enumerate}

\begin{figure}[!h]
\centering     
\subfigure[Edge generation seed 1]{\includegraphics[width=0.85\columnwidth]{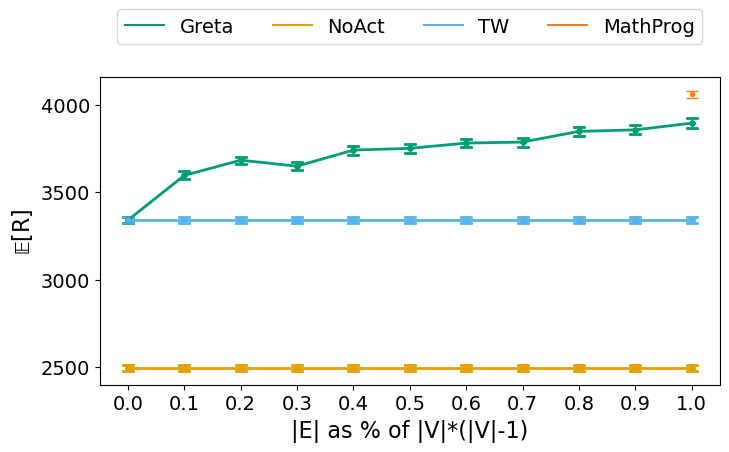}}
\subfigure[Edge generation seed 2]{\includegraphics[width=0.85\columnwidth]{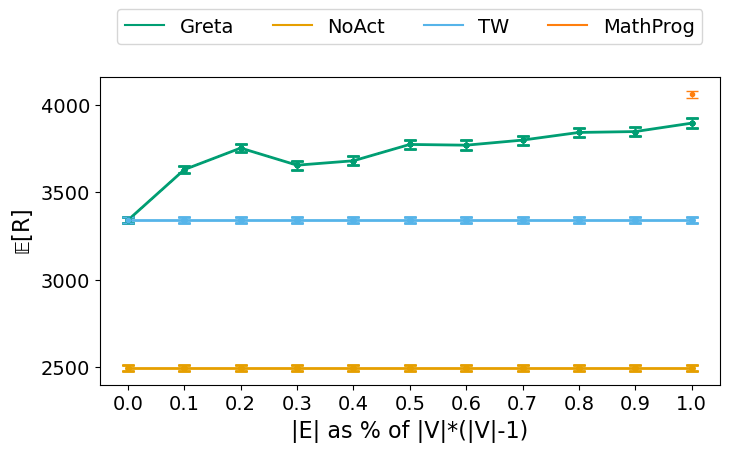}}
\subfigure[Edge generation seed 3]{\includegraphics[width=0.85\columnwidth]{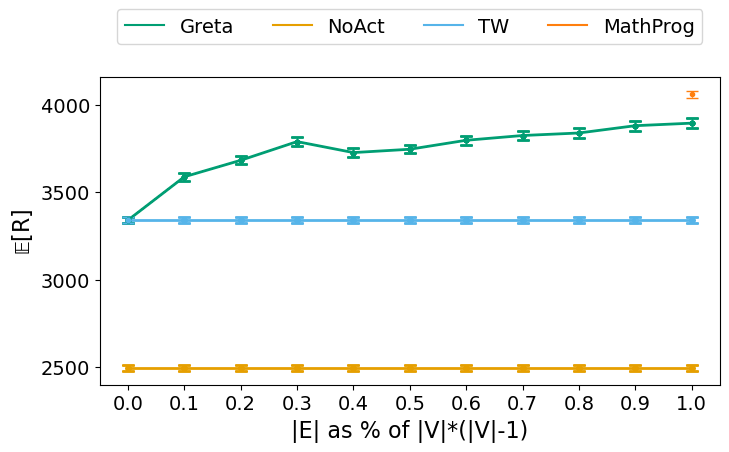}}
\end{figure}


\begin{figure}[!h]
\centering     
\subfigure[Edge generation seed 4]{\includegraphics[width=0.85\columnwidth]{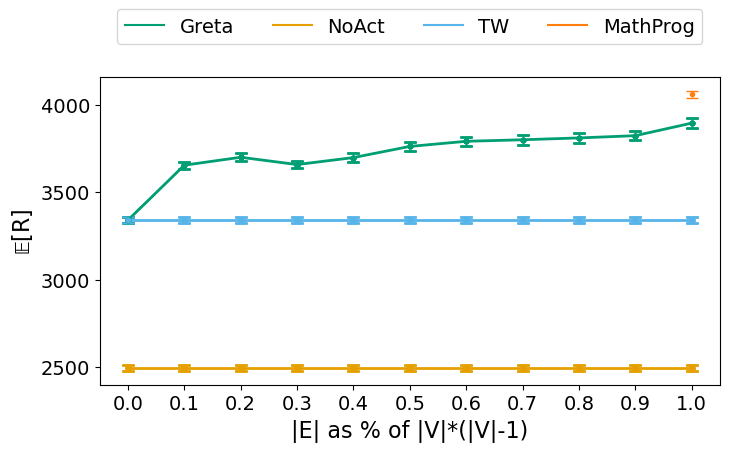}}
\subfigure[Edge generation seed 5]{\includegraphics[width=0.85\columnwidth]{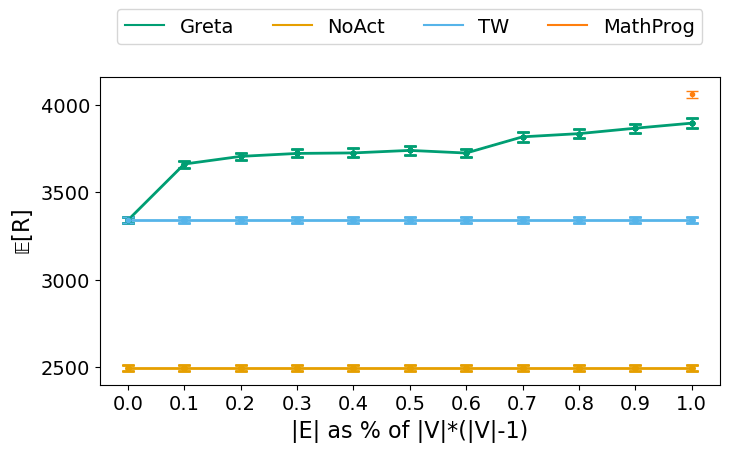}}
\subfigure[Edge generation seed 6]{\includegraphics[width=0.85\columnwidth]{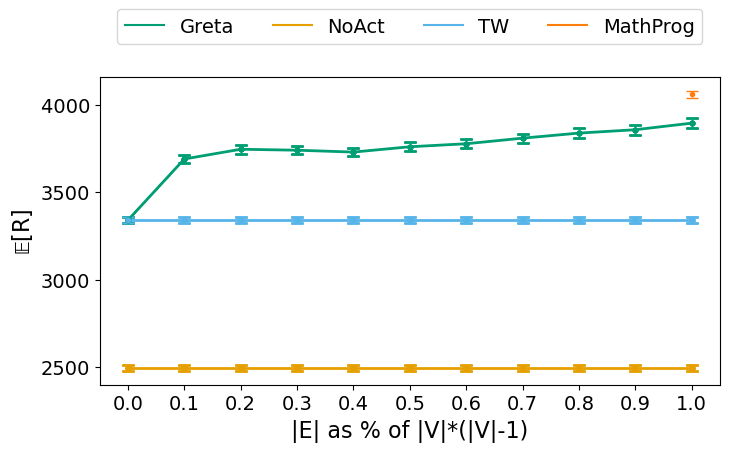}}
\end{figure}

\section{Additional experimental details}
\label{app:experimentDetails}
\textbf{Source code}: For the source code and instructions needed to reproduce the experimental results we report in  Section~\ref{sec:experimental}, see:
\href{https://github.com/crherlihy/networked_restless_bandits}{\faGithub{} \texttt{crherlihy/networked\_restless\_bandits}}.\\

\textbf{Compute resources}: We ran all of our simulations on a MacBook Pro with a 2 GHz Quad-Core Intel(R) Core i5 CPU and 16 GB of RAM.

\section*{Acknowledgments}
We were supported by NSF CAREER Award IIS-1846237, NSF D-ISN Award \#2039862, NSF Award CCF-1852352, NIH R01 Award NLM013039-01, NIST MSE Award \#20126334, DARPA GARD \#HR00112020007, DoD WHS Award \#HQ003420F0035, ARPA-E Award \#4334192, ARL Award W911NF2120076 and a Google Faculty Research Award. The views and conclusions contained in this publication
are those of the authors and should not be interpreted as representing official policies or endorsements of U.S. government or funding agencies. We thank Samuel Dooley, Pranav Goel, Aviva Prins, Dr. Philip Resnik, and our anonymous reviewers for their helpful input and feedback. 

\bibliography{main}

\begin{thebibliography}{22}
\providecommand{\natexlab}[1]{#1}

\bibitem[{Almasan et~al.(2022)Almasan, Su{\'{a}}rez{-}Varela, Rusek,
  Barlet{-}Ros, and Cabellos{-}Aparicio}]{almasan2022deep}
Almasan, P.; Su{\'{a}}rez{-}Varela, J.; Rusek, K.; Barlet{-}Ros, P.; and
  Cabellos{-}Aparicio, A. 2022.
\newblock Deep reinforcement learning meets graph neural networks: Exploring a
  routing optimization use case.
\newblock \emph{Computer Communications}, 196: 184--194.

\bibitem[{Fisher et~al.(2017)Fisher, Boothroyd, Elstad, Hays, Henes, Maslow,
  and Velicer}]{fisher2017peer}
Fisher, E.~B.; Boothroyd, R.~I.; Elstad, E.~A.; Hays, L.; Henes, A.; Maslow,
  G.~R.; and Velicer, C. 2017.
\newblock Peer support of complex health behaviors in prevention and disease
  management with special reference to diabetes: systematic reviews.
\newblock \emph{Clinical Diabetes and Endocrinology}, 3(1): 4.

\bibitem[{Glazebrook, Hodge, and Kirkbride(2011)}]{glazebrook2011general}
Glazebrook, K.~D.; Hodge, D.~J.; and Kirkbride, C. 2011.
\newblock {General notions of indexability for queueing control and asset
  management}.
\newblock \emph{The Annals of Applied Probability}, 21(3): 876 -- 907.

\bibitem[{Holland, Laskey, and Leinhardt(1983)}]{holland1983stochastic}
Holland, P.~W.; Laskey, K.~B.; and Leinhardt, S. 1983.
\newblock Stochastic blockmodels: First steps.
\newblock \emph{Social Networks}, 5(2): 109--137.

\bibitem[{Jung, Abeille, and Tewari(2019)}]{jung2019thompson}
Jung, Y.~H.; Abeille, M.; and Tewari, A. 2019.
\newblock Thompson Sampling in Non-Episodic Restless Bandits.
\newblock \emph{CoRR}, abs/1910.05654.

\bibitem[{Jung and Tewari(2019)}]{jung2019regret}
Jung, Y.~H.; and Tewari, A. 2019.
\newblock Regret Bounds for Thompson Sampling in Episodic Restless Bandit
  Problems.
\newblock In Wallach, H.; Larochelle, H.; Beygelzimer, A.; d\textquotesingle
  Alch\'{e}-Buc, F.; Fox, E.; and Garnett, R., eds., \emph{Advances in Neural
  Information Processing Systems}, volume~32. Curran Associates, Inc.

\bibitem[{Killian, Perrault, and Tambe(2021)}]{killian2021beyond}
Killian, J.~A.; Perrault, A.; and Tambe, M. 2021.
\newblock Beyond "To Act or Not to Act": Fast Lagrangian Approaches to General
  Multi-Action Restless Bandits.
\newblock In \emph{Proceedings of the 20th International Conference on
  Autonomous Agents and MultiAgent Systems}, AAMAS '21, 710–718. Richland,
  SC: International Foundation for Autonomous Agents and Multiagent Systems.
\newblock ISBN 9781450383073.

\bibitem[{Liu and Zhao(2010)}]{liu2010indexability}
Liu, K.; and Zhao, Q. 2010.
\newblock {Indexability of Restless Bandit Problems and Optimality of Whittle
  Index for Dynamic Multichannel Access}.
\newblock \emph{IEEE Transactions on Information Theory}, 56(11): 5547–5567.

\bibitem[{Lu, Hu, and Zhang(2021)}]{lu2021stochastic}
Lu, S.; Hu, Y.; and Zhang, L. 2021.
\newblock Stochastic Bandits with Graph Feedback in Non-Stationary
  Environments.
\newblock \emph{Proceedings of the AAAI Conference on Artificial Intelligence},
  35(10): 8758--8766.

\bibitem[{Mate et~al.(2020)Mate, Killian, Xu, Perrault, and
  Tambe}]{mate2020collapsing}
Mate, A.; Killian, J.; Xu, H.; Perrault, A.; and Tambe, M. 2020.
\newblock Collapsing Bandits and Their Application to Public Health
  Intervention.
\newblock In Larochelle, H.; Ranzato, M.; Hadsell, R.; Balcan, M.; and Lin, H.,
  eds., \emph{Advances in Neural Information Processing Systems}, volume~33,
  15639--15650. Curran Associates, Inc.

\bibitem[{Ortner et~al.(2012)Ortner, Ryabko, Auer, and
  Munos}]{ortner2012regret}
Ortner, R.; Ryabko, D.; Auer, P.; and Munos, R. 2012.
\newblock Regret Bounds for Restless Markov Bandits.
\newblock In \emph{Proceedings of the 23rd International Conference on
  Algorithmic Learning Theory}, ALT'12, 214–228. Berlin, Heidelberg:
  Springer-Verlag.
\newblock ISBN 9783642341052.

\bibitem[{Ou et~al.(2022)Ou, Siebenbrunner, Killian, Brooks, Kempe,
  Vorobeychik, and Tambe}]{ou2022networked}
Ou, H.-C.; Siebenbrunner, C.; Killian, J.; Brooks, M.~B.; Kempe, D.;
  Vorobeychik, Y.; and Tambe, M. 2022.
\newblock Networked Restless Multi-Armed Bandits for Mobile Interventions.
\newblock In \emph{Proceedings of the 21st International Conference on
  Autonomous Agents and Multiagent Systems}, AAMAS '22, 1001–1009. Richland,
  SC: International Foundation for Autonomous Agents and Multiagent Systems.
\newblock ISBN 9781450392136.

\bibitem[{Papadimitriou and Tsitsiklis(1994)}]{papadimitriou1994complexity}
Papadimitriou, C.~H.; and Tsitsiklis, J.~N. 1994.
\newblock {The Complexity of Optimal Queueing Network Control}.
\newblock In \emph{Proceedings of IEEE 9th Annual Conference on Structure in
  Complexity Theory}, 318--322. IEEE.

\bibitem[{Pasanisi, Fu, and Bousquet(2012)}]{pasanisi2012estimating}
Pasanisi, A.; Fu, S.; and Bousquet, N. 2012.
\newblock Estimating Discrete Markov Models from Various Incomplete Data
  Schemes.
\newblock \emph{Computational Statistics \& Data Analysis}, 56(9): 2609–2625.

\bibitem[{Pedregosa et~al.(2011)Pedregosa, Varoquaux, Gramfort, Michel,
  Thirion, Grisel, Blondel, Prettenhofer, Weiss, Dubourg, Vanderplas, Passos,
  Cournapeau, Brucher, Perrot, and Duchesnay}]{pedregosa2011scikit}
Pedregosa, F.; Varoquaux, G.; Gramfort, A.; Michel, V.; Thirion, B.; Grisel,
  O.; Blondel, M.; Prettenhofer, P.; Weiss, R.; Dubourg, V.; Vanderplas, J.;
  Passos, A.; Cournapeau, D.; Brucher, M.; Perrot, M.; and Duchesnay, E. 2011.
\newblock Scikit-learn: Machine Learning in {P}ython.
\newblock \emph{Journal of Machine Learning Research}, 12: 2825--2830.

\bibitem[{Steimle and Denton(2017)}]{steimle2017markov}
Steimle, L.~N.; and Denton, B.~T. 2017.
\newblock \emph{Markov Decision Processes for Screening and Treatment of
  Chronic Diseases}, 189--222.
\newblock Cham: Springer International Publishing.
\newblock ISBN 978-3-319-47766-4.

\bibitem[{Sutton and Barto(2018)}]{sutton2018reinforcement}
Sutton, R.~S.; and Barto, A.~G. 2018.
\newblock \emph{Reinforcement Learning: An Introduction}.
\newblock Cambridge, MA, USA: A Bradford Book.
\newblock ISBN 0262039249.

\bibitem[{Thompson(1933)}]{thompson1933likelihood}
Thompson, W.~R. 1933.
\newblock On the Likelihood that One Unknown Probability Exceeds Another in
  View of the Evidence of Two Samples.
\newblock \emph{Biometrika}, 25(3/4): 285--294.

\bibitem[{Valko(2016)}]{valko2016bandits}
Valko, M. 2016.
\newblock \emph{{Bandits on graphs and structures}}.
\newblock Habilitation {\`a} diriger des recherches, {{\'E}cole normale
  sup{\'e}rieure de Cachan - ENS Cachan}.

\bibitem[{Weber and Weiss(1990)}]{weber1990index}
Weber, R.~R.; and Weiss, G. 1990.
\newblock On an index policy for restless bandits.
\newblock \emph{Journal of Applied Probability}, 27(3): 637–648.

\bibitem[{Whittle(1988)}]{whittle1988restless}
Whittle, P. 1988.
\newblock Restless bandits: activity allocation in a changing world.
\newblock \emph{Journal of Applied Probability}, 25(A): 287–298.

\bibitem[{Zhou et~al.(2020)Zhou, Cui, Hu, Zhang, Yang, Liu, Wang, Li, and
  Sun}]{zhou2020graph}
Zhou, J.; Cui, G.; Hu, S.; Zhang, Z.; Yang, C.; Liu, Z.; Wang, L.; Li, C.; and
  Sun, M. 2020.
\newblock Graph neural networks: A review of methods and applications.
\newblock \emph{AI Open}, 1: 57--81.

\end{thebibliography}
\end{document}